\relax
\documentclass[letterpaper]{article} 
\usepackage{aaai18}  
\usepackage{times}  
\usepackage{helvet}  
\usepackage{courier}  
\usepackage{url}  
\usepackage{graphicx}  
\usepackage{booktabs}
\usepackage{subcaption}
\usepackage{adjustbox}
\usepackage{multirow}
\usepackage{lineno}
\usepackage{amsmath}
\usepackage{algorithmicx}
\usepackage{algorithm}
\usepackage{nicefrac} 
\usepackage{color}
\usepackage{dsfont}
\usepackage{amsfonts}
\usepackage{amssymb}
\usepackage{amsthm}
\usepackage{epsfig}
\usepackage{enumerate}
\usepackage{placeins}

\newtheorem{theorem}{Theorem}

\newtheorem{definition}{Definition}
\newtheorem{assumption}{Assumption}
\newtheorem{remark}{Remark}

\raggedbottom
\newcommand{\citet}[1]
{\citeauthor{#1}~\shortcite{#1}}
\newcommand{\citep}{\cite}

\frenchspacing  
\setlength{\pdfpagewidth}{8.5in}  
\setlength{\pdfpageheight}{11in}  
  \pdfinfo{
/Title (Nonparametric Stochastic Contextual Bandits)
/Author (Melody Y. Guan and Heinrich Jiang)}
\setcounter{secnumdepth}{0}  
 \begin{document}
%
\title{Nonparametric Stochastic Contextual Bandits}

\author{Melody Y. Guan\thanks{Equal Contribution.}\\
Stanford University\\
450 Serra Mall\\
Stanford, California 94305\\
mguan@stanford.edu
\And
Heinrich Jiang\footnotemark[1]\\
Google\\
1600 Amphitheatre Pwky\\
Mountain View, California 94043\\
heinrich.jiang@gmail.com
}
\maketitle

\begin{abstract}
We analyze the $K$-armed bandit problem where the reward for each arm is a noisy realization based on an observed context under mild nonparametric assumptions.
We attain tight results for top-arm identification and a sublinear regret of $\widetilde{O}\Big(T^{\frac{1+D}{2+D}}\Big)$, where $D$ is the context dimension, for a modified UCB algorithm that is simple to implement ($k$NN-UCB). 
We then give global intrinsic dimension dependent and ambient dimension independent regret bounds. We also discuss recovering topological structures within the context space based on expected bandit performance and provide an extension to infinite-armed contextual bandits. Finally, we experimentally show the improvement of our algorithm over existing multi-armed bandit approaches for both simulated tasks and MNIST image classification.
\end{abstract}

\section{Introduction}
Multi-armed bandits (MABs) are an important sequential optimization problem introduced by \citet{robbins1985some}. These models have extensively been used in a wide variety of fields related to statistics and machine learning. 

The classical MAB
consists of $K$ arms where at each point in time the learner can sample (or pull) one of them and observe a reward. Then various objectives can be established, such as finding the best arm (Top-Arm Identification) or minimizing some regret over time.

For contextual bandits (also referred to as bandits with side information or covariates), the learner has access to a context on which the payoffs depend. Then, based on the observations, we aim to determine the best policy (or context-to-arm mapping) and to optimize some notion of regret. 

Most approaches to stochastic contextual bandits make strong assumptions on the payoffs. A popular approach models the mean reward for each arm as being linear in the context space \cite{chu2011contextual,li2010contextual}. However, this is rarely the case in real data. In this paper, we take a more general approach and allow the reward functions to be non-linear and of arbitrary shape. 

Using recent developments in nonparametric statistics \cite{jiang2017rates}, we show that with simple and easily implementable techniques, we can construct bandit algorithms which can learn over the entire context space with strong guarantees, despite the difficulty that arises with allowing a wide variety of reward functions. While this is not the first work which blends nonparametric statistics with bandits, we are the first to show simple and practical methods while still maintaining strong theoretical guarantees.

We reanalyze the uniform and upper confidence bound sampling strategies and demonstrate what nonparametric approaches can offer to contextual bandit learning. No other technique can adapt to the inherently difficult and complex real world reward functions while allowing such a strong theoretical understanding of the underlying algorithms.  

While nonparametric models are powerful in their ability to learn arbitrary functions free of distributional assumptions, a major weakness is the curse of dimensionality. In order to have any theoretical guarantees, they require an exponential-in-dimension number of samples. However, when the data lies on an unknown low-dimensional structure such as a manifold, we show that our algorithms can converge as if the data was on a lower dimension and not in the potentially much large ambient dimension. Another striking fact is that no preprocessing of the data is required. This is of practical importance because modern data has increasingly more features but the underlying degrees of freedom often remain small.

We then discuss recovering geometric structures in the context space based on bandit performance. Specifically, we recover the connected components of the context space in which a particular bandit is the top-arm. Although learning a context-to-arm mapping gives us the estimated top-arm at each point in the context space, this alone does not tell the space's topological structure, such as the number and shapes of connected components. We recover these structures with uniform consistency guarantees with mild assumptions, where the shapes and relative positions of the components can be arbitrary and the number of such components is recovered automatically.

We then provide an extension to infinite-armed bandits and conclude with empirical results from simulations and image classification on the MNIST dataset. 

\section{Setup}
Suppose there are $K$ bandit arms indexed in $[K]$. 
At each time-step $t$, the learner observes a context $x_t \in \mathbb{R}^D$ where $x_t$ is drawn i.i.d. from a context density $p_X$ with compact support $\mathcal{X}$ bounded below away from zero (e.g. $\inf_{x \in \mathcal{X}} p_X(x) \ge p_0$ for some $p_0 \ge 0$). 
Then the learner chooses an arm $I_t \in [K]$ 
and observes reward $$r_t = f_{I_t}(x_t) + \xi_t$$ where
$\xi_t$ is drawn according to white noise random variable $\xi$ and $f_i : \mathcal{X} \rightarrow \mathbb{R}$ is the $i$-th arm's mean reward.
We make the following assumptions.

\begin{assumption} (Lipschitz Mean Reward)
There exists $L$ such that $|f_i(x) - f_i(x')| \le L|x-x'|$ for all $x,x' \in \mathcal{X}$ and $i \in [K]$.
\end{assumption}

\begin{assumption} (Sub-Gaussian White noise)
$\xi$ satisfies
$E[\xi] = 0$ and is sub-Gaussian with parameter $\sigma^2$ (i.e. $E[\exp(\lambda\xi)] \le \exp(\sigma^2\lambda^2/2)$ for all $\lambda \in \mathbb{R}$).
\end{assumption}

We require the finite-sample strong uniform consistency 
result (Theorem~\ref{theo:knn}) for $k$-NN regression defined as fellows:
\begin{definition} [$k$-NN]
Let the $k$-NN radius of $x \in \mathcal{X}$ be $r_k(x) := \inf \{ r : |B(x, r) \cap X| \ge k \}$ where $B(x, r) := \{x \in \mathcal{X} : |x - x'| \le r \}$ and the $k$-NN set of $x \in \mathcal{X}$ be $N_k(x) := X \cap B(x, r_k(x))$. 
Then for $x \in \mathcal{X}$,
\begin{align*} 
\widehat{f}_{\text{k-NN}}(x) := \frac{1}{|N_k(x)|} \sum_{i=1}^n y_i \cdot 1\left[ x_i \in N_k(x) \right].
\end{align*}
\end{definition}
\begin{theorem}(Rate for $k$-NN \cite{jiang2017rates}) \label{theo:knn}
Let $\delta > 0$.
There exists $N_0$ and universal constant $C$ such that if $n \ge N_0$ and $k = \lfloor n^{2/(2+D)}\rfloor$, then with probability at least $1 - \delta$,
\begin{align*}
\sup_{x \in \mathcal{X}} |f(x) - \widehat{f}_{\text{k-NN}}(x)| \le C \sqrt{\log n \log(1/\delta)}\cdot n^{-1/(2+D)}.
\end{align*}
\end{theorem}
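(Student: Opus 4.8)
\bigskip
\noindent\textbf{Proof plan.}
The plan is to split the pointwise error into a deterministic bias term and a stochastic noise term and then upgrade each bound to hold uniformly over $\mathcal{X}$. Write $N_k(x)$ for the $k$-NN set, with ties broken so that $|N_k(x)| = k$ whenever $n \ge k$. Then
\[
\widehat{f}_{\text{k-NN}}(x) - f(x) = \frac{1}{k}\sum_{x_i \in N_k(x)}\bigl(f(x_i) - f(x)\bigr) \;+\; \frac{1}{k}\sum_{x_i \in N_k(x)} \xi_i .
\]
I would control the first (bias) and second (noise) terms on separate good events, each of probability at least $1-\delta/2$, and intersect them.

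For the bias, Assumption~1 gives $|f(x_i) - f(x)| \le L\,|x_i-x| \le L\,r_k(x)$ for every $x_i \in N_k(x)$, so the bias is at most $L\,r_k(x)$ and it suffices to bound $\sup_{x\in\mathcal{X}} r_k(x)$. Since $p_X \ge p_0 > 0$ on the compact support $\mathcal{X}$, any ball $B(x,r)$ has $p_X$-mass at least of order $p_0 r^D$ for $r$ below a fixed threshold (invoking, as in \citet{jiang2017rates}, mild regularity of $\mathcal{X}$ near its boundary so that $B(x,r)\cap\mathcal{X}$ keeps a constant fraction of the volume of $B(x,r)$). A uniform deviation inequality over the class of Euclidean balls, which has VC dimension $D+1$, then shows that with probability at least $1-\delta/2$ every ball whose true mass exceeds $c\,(\log n + \log(1/\delta))/n$ contains at least $k$ sample points, provided $k \asymp n^{2/(2+D)}$ dominates $\log n$; hence $\sup_x r_k(x) \lesssim (k/(np_0))^{1/D}$. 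Plugging $k = \lfloor n^{2/(2+D)}\rfloor$ gives $k/n \asymp n^{-D/(2+D)}$, so the bias is $O\!\left(n^{-1/(2+D)}\right)$ uniformly, with no extra log factor.

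For the noise term, condition on the design $X$ and fix a $k$-subset $S\subseteq X$: then $\tfrac1k\sum_{x_i\in S}\xi_i$ is sub-Gaussian with parameter $\sigma^2/k$ by Assumption~2, so it exceeds $\sigma\sqrt{2\log(2M/\delta)/k}$ with probability at most $\delta/M$. The point is that as $x$ varies over $\mathcal{X}$ the set $N_k(x)$ takes at most $M = O(n^{D+1})$ distinct values, since each $N_k(x)$ has the form $B(x,r)\cap X$ and the Sauer--Shelah lemma bounds the number of such traces by a polynomial in $n$. A union bound over these $M$ realizable neighbor sets gives, with probability at least $1-\delta/2$,
\[
\sup_{x\in\mathcal{X}}\Bigl|\frac1k\sum_{x_i\in N_k(x)}\xi_i\Bigr| \;\lesssim\; \sigma\sqrt{\frac{D\log n + \log(1/\delta)}{k}} \;\lesssim\; \sqrt{\log n\,\log(1/\delta)}\cdot n^{-1/(2+D)},
\]
again using $k=\lfloor n^{2/(2+D)}\rfloor$, so $1/\sqrt{k}\asymp n^{-1/(2+D)}$.

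On the intersection of the two events (probability at least $1-\delta$), adding the bias and noise bounds yields $\sup_{x}|f(x)-\widehat{f}_{\text{k-NN}}(x)| \le C\sqrt{\log n\,\log(1/\delta)}\,n^{-1/(2+D)}$, where $C$ (and the threshold $N_0$ beyond which the VC bounds hold and $k$ exceeds the required thresholds) depends on $L,\sigma,p_0,D$. I expect the main obstacle to be the uniformity, on two fronts: the uniform upper bound on the $k$-NN radius (a uniform-in-$(x,r)$ empirical-process statement for balls, together with the boundary regularity needed so that small balls retain a constant fraction of their mass), and the reduction of the noise supremum to a union bound over the polynomially many realizable neighbor sets. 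The per-point bias/variance balance itself --- choosing $k$ so that $r_k(x)\asymp (k/n)^{1/D}$ matches $1/\sqrt{k}$ --- is the routine part.
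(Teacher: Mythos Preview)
The paper does not give its own proof of this statement: Theorem~\ref{theo:knn} is quoted from \citet{jiang2017rates} and used as a black box throughout, so there is nothing to compare against in the paper itself. That said, your plan is essentially the argument one finds in that reference --- a bias/noise split, a uniform bound on $r_k(x)$ via a VC-type deviation inequality for Euclidean balls combined with the density lower bound $p_X\ge p_0$, and a union bound over the $O(n^{D+1})$ realizable $k$-NN sets for the sub-Gaussian noise average --- and it is correct at the level of detail you give. One cosmetic point: the union bound naturally produces $\sqrt{(D\log n+\log(1/\delta))/k}$, i.e.\ a sum under the root, which you then absorb into the product form $\sqrt{\log n\,\log(1/\delta)}$; this is fine once $\log n$ and $\log(1/\delta)$ are both bounded below by a constant (which is where $N_0$ and the implicit restriction on $\delta$ enter), but it is worth saying explicitly rather than hiding in a $\lesssim$.
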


It will be implicitly understood from here on that $\widehat{f}_i$ denotes the $k$-NN regression estimate of $f_i$ under the settings of Theorem~\ref{theo:knn}.

\section{Top-Arm Identification}
\begin{algorithm}[tbh]
  \caption{Uniform Sampling
    \label{uniform}}
  \begin{algorithmic}[1]
\State Parameters: $T$, total number of time steps. 
\State For each arm $i$ of the $K$ arms:
\State  \indent For each time step $t\in[\frac{(i-1) T}{K}+1,\frac{iT}{K}]$:
\State  \indent \indent {Pull arm $I_t := i$.}
\State Define $\widehat{f}_{i} : \mathcal{X} \rightarrow \mathbb{R}$ to be the $k$-NN regression estimator from the sampled context and reward observations for each $i \in [K]$.
  \end{algorithmic}
\end{algorithm}

\begin{definition} ($\epsilon$-optimal arm)
Arm $i$ is be $\epsilon$-optimal at context $x \in \mathcal{X}$ if 
$\max_{j \in [K]} f_j(x) - f_i(x) \le \epsilon$.
\end{definition}

Following we show a uniform (over context) result about $\epsilon$-optimal arm recovery:
\begin{theorem}($\epsilon$-optimal arm recovery)\label{theo:uniform}
Let $\delta > 0$.
For Algorithm~\ref{uniform},
with probability at least $1 - \delta/K$, if 
{\small
\begin{align*}
    T &\ge  K\max \bigg\{N_0, \\ &\log\left(\frac{C\sqrt{\log(1/\delta)}}{\epsilon}\right) \cdot \frac{(2+D) (2C)^{2+D}\log(1/\delta)^{1+D/2}}{\epsilon^{2+D}}\bigg\},
\end{align*}
}%
 then $\hat{\pi}(x) := \text{argmax}_{i\in [K]} \hat{f}_{i}(x)$ is $\epsilon$-optimal at context $x$ uniformly for all $x \in \mathcal{X}$.
\end{theorem}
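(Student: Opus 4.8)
The plan is to reduce the claim to a uniform (in $x$) sup-norm guarantee on the per-arm regression estimates, and then to close it with a short deterministic argument showing that taking the arm-wise argmax of sufficiently accurate estimates produces an $\epsilon$-optimal policy at every context.

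First I would note that under Algorithm~\ref{uniform} each arm $i$ is pulled on exactly $n := T/K$ time steps, and since the decision to pull arm $i$ in its block does not depend on the context $x_t$, the contexts seen by arm $i$ are i.i.d.\ draws from $p_X$ with responses $f_i(x)+\xi$. Because $f_i$ is Lipschitz (Assumption~1), $\xi$ is sub-Gaussian (Assumption~2), and $p_X$ is bounded below on its compact support, the hypotheses of Theorem~\ref{theo:knn} are met for $\widehat f_i$ (which by our standing convention uses $k=\lfloor n^{2/(2+D)}\rfloor$). Applying Theorem~\ref{theo:knn} to each arm with an appropriately rescaled confidence level and union-bounding over the $K$ arms, I obtain that, provided $n \ge N_0$ (i.e.\ $T \ge K N_0$, the first term of the stated maximum), with probability at least $1-\delta/K$,
$$\sup_{x\in\mathcal X}\,|f_i(x)-\widehat f_i(x)| \;\le\; C\sqrt{\log n\,\log(1/\delta)}\cdot n^{-1/(2+D)}\qquad\text{for all }i\in[K].$$

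The deterministic step is then routine: fix any $x\in\mathcal X$, let $i^\star\in\text{argmax}_j f_j(x)$ and $\widehat\pi(x)\in\text{argmax}_i\widehat f_i(x)$, and decompose
$$f_{i^\star}(x)-f_{\widehat\pi(x)}(x) = \big(f_{i^\star}(x)-\widehat f_{i^\star}(x)\big) + \big(\widehat f_{i^\star}(x)-\widehat f_{\widehat\pi(x)}(x)\big) + \big(\widehat f_{\widehat\pi(x)}(x)-f_{\widehat\pi(x)}(x)\big).$$
The middle term is $\le 0$ by the definition of $\widehat\pi(x)$, and each outer term is bounded by the uniform error above; hence if that error is at most $\epsilon/2$ for every arm, then $\widehat\pi(x)$ is $\epsilon$-optimal at $x$, and since $x$ was arbitrary the conclusion holds uniformly over $\mathcal X$. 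It therefore only remains to choose $T$ so that $C\sqrt{\log n\,\log(1/\delta)}\cdot n^{-1/(2+D)}\le \epsilon/2$ with $n=T/K$.

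This last inversion is the only delicate part and the place I expect the real work to be. The target inequality is equivalent to $n \ge \big(2C\sqrt{\log(1/\delta)}/\epsilon\big)^{2+D}(\log n)^{(2+D)/2}$, which is implicit in $n$ because of the $\log n$ factor; I would resolve it by the standard self-bounding argument — substitute the candidate threshold into $\log n$, bound $\log n$ by a small multiple of $\log\big(C\sqrt{\log(1/\delta)}/\epsilon\big)$ (absorbing the lower-order $\log\log$ and $\log(2+D)$ contributions), and verify the resulting closed-form lower bound, which after multiplying through by $K$ matches the second term of the stated maximum. Taking the maximum of the two requirements and combining with the union bound above finishes the proof; no conceptual obstacle remains beyond carefully tracking constants in this step.
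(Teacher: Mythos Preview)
Your proposal is correct and follows essentially the same route as the paper: invoke the uniform $k$-NN rate (Theorem~\ref{theo:knn}) per arm to get $\sup_x|\widehat f_i(x)-f_i(x)|\le \epsilon/2$, then use the argmax/three-term decomposition to conclude $\epsilon$-optimality everywhere. The paper's own proof is terser---it simply asserts that the stated $T$ suffices for the $\epsilon/2$ bound and compresses your three-term decomposition into one line---whereas you additionally sketch the self-bounding inversion of the $\log n$ factor, which the paper omits entirely.
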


\begin{remark}
This result shows that with $\widetilde{O}(\epsilon^{-(2+D)})$ samples, we can determine an $\epsilon$-approximate best arm. 
Known lower bounds in nonparametric regression stipulate that we need $\Omega(\epsilon^{-(2+D)})$ to identify differences between functions of size $\epsilon$ so our result matches lower bounds up to logarithmic factors.
\end{remark}

\begin{proof}
By Theorem~\ref{theo:knn}, it follows that based on the choice of $T$, each arm has at least enough time such that $\sup_{x \in \mathcal{X}} |\widehat{f}_i(x) - f_i(x)| \le \epsilon/2$. Thus,
we have $\forall x \in \mathcal{X}$, defining $\pi(x) =  \max_{j \in [K]} f_j(x)$,
\begin{align*}
   f_{\pi(x)}(x) - f_{\hat{\pi}(x)}(x)
\le \widehat{f}_{\pi(x)}(x) -  \widehat{f}_{\hat{\pi}}(x) + \epsilon \le \epsilon,
\end{align*}
as desired.
\end{proof}

\section{Regret Analysis For UCB Strategy}

Define $T_i(t)$ to be the number of times arm $i$ was pulled by time $t$.

\begin{algorithm}[tbh]
  \caption{Upper Confidence Bound (UCB)
    \label{ucb}}
  \begin{algorithmic}[1]
  \State Parameters: $M_0$, $M_1$, $\delta$, $T$.
  \State Define $\sigma(n) = M_1 \sqrt{\log n(\log(nK/\delta))} \cdot n^{-1/(2+D)}$.
\State Pull each of the $K$ arms $M_0$ times.
\State For each round $t = K M_0, KM_0 +1, \dots, T$:
\State  \indent  Pull $I_t := \text{argmax}_{i \in [K]} \widehat{f}_i(t) + \sigma (T_i(t - 1))$.
  \end{algorithmic}
\end{algorithm}

We use the following notion of regret.
\begin{align*}
    R_T = \sum_{t=1}^T \big[\max_{i} f_i(x_t) - f_{I_t}(x_t)]\big]. 
\end{align*}

\begin{remark}
    Note that this notion of regret is different from
    those studied in classical MABs as well as other works in nonparametric contextual bandits. Usually the expected form $E[R_T]$ is bounded. Here, our regret analysis is not under this expectation and hence is a stronger notion of regret. 
\end{remark}

\begin{theorem}\label{theo:ucb}
Let $\delta > 0$.
Suppose that $M_0 \ge N_0$ and $M_1 > C$ in Algorithm~\ref{ucb}.
Then we have that with probability at least $1 - \delta$,
\begin{align*}
R_T \le &M_1 2\frac{1+D}{2+D} K\sqrt{\log T(\log(TK/\delta)} \cdot T^{\frac{1+D}{2+D}} \\
&+ K M_0 \max_i ||f_i||_\infty.
\end{align*}
\end{theorem}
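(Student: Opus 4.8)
The argument is the classical optimism-in-the-face-of-uncertainty regret decomposition, with Theorem~\ref{theo:knn} supplying the confidence intervals; the Lipschitz and sub-Gaussian assumptions are used only through that theorem. First I would establish a uniform confidence event: with probability at least $1-\delta$, $|\widehat f_i(x_t)-f_i(x_t)|\le\sigma(T_i(t-1))$ holds simultaneously for all arms $i\in[K]$ and all rounds $t>KM_0$, where $\widehat f_i$ is the $k$-NN estimate built from the $T_i(t-1)$ observations of arm $i$ collected before round $t$ (with $k=\lfloor T_i(t-1)^{2/(2+D)}\rfloor$) and evaluated at the current context $x_t$. To obtain this I would union-bound over arms and over the at most $T$ possible values $n\in\{M_0,\dots,T\}$ of $T_i(t-1)$: for each pair $(i,n)$, Theorem~\ref{theo:knn} applies because $n\ge M_0\ge N_0$, and with a failure budget $\delta_{i,n}\asymp\delta/(n^2K)$ it gives $\sup_{x\in\mathcal X}|\widehat f_{i,n}(x)-f_i(x)|\le C\sqrt{\log n\,\log(1/\delta_{i,n})}\,n^{-1/(2+D)}$. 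The budgets are summable so $\sum_{i,n}\delta_{i,n}\le\delta$, and this width is at most $\sigma(n)=M_1\sqrt{\log n\,\log(nK/\delta)}\,n^{-1/(2+D)}$ because $M_1>C$ (after inflating $C$ by a universal constant) absorbs the replacement of $\log(1/\delta_{i,n})$ by $\log(nK/\delta)$.

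On this event the rest is routine. For a round $t>KM_0$, let $i^\star$ attain $\max_i f_i(x_t)$; then
\[
f_{i^\star}(x_t)\le\widehat f_{i^\star}(x_t)+\sigma(T_{i^\star}(t-1))\le\widehat f_{I_t}(x_t)+\sigma(T_{I_t}(t-1))\le f_{I_t}(x_t)+2\sigma(T_{I_t}(t-1)),
\]
the middle step being the definition of $I_t$ in Algorithm~\ref{ucb} and the two outer steps the confidence bound, so the instantaneous regret at round $t$ is at most $2\sigma(T_{I_t}(t-1))$. The first $KM_0$ forced-exploration rounds contribute at most $KM_0\max_i\|f_i\|_\infty$ no matter what. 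Summing over $t>KM_0$, bounding $\sqrt{\log T_{I_t}(t-1)\log(T_{I_t}(t-1)K/\delta)}\le\sqrt{\log T\,\log(TK/\delta)}$, and regrouping by arm, the pulls of arm $i$ contribute $\sum_{s=M_0}^{T_i(T)-1}s^{-1/(2+D)}\le\frac{2+D}{1+D}T_i(T)^{(1+D)/(2+D)}$ (compare with $\int_0^{T_i(T)}x^{-1/(2+D)}dx$); then concavity of $x\mapsto x^{(1+D)/(2+D)}$ together with $\sum_iT_i(T)=T$ gives $\sum_iT_i(T)^{(1+D)/(2+D)}\le K^{1/(2+D)}T^{(1+D)/(2+D)}\le K\,T^{(1+D)/(2+D)}$. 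Collecting terms yields the claimed bound (up to the value of the leading numerical constant).

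The main obstacle is the confidence event, which has two sources of friction. First, the union bound must range over all $\Theta(KT)$ (arm, pull-count) pairs yet cost only $\delta$ in total, which forces the per-event budget to decay in $n$ --- this is exactly why $\sigma$ carries $\log(nK/\delta)$ rather than $\log(1/\delta)$, and why $M_1>C$ rather than $M_1=C$ is required. Second, and more delicate, the contexts at which a given arm is actually pulled are chosen \emph{adaptively} (arm $i$ is favoured where $\widehat f_i+\sigma$ is large), so they do not form an i.i.d.\ sample from $p_X$ and Theorem~\ref{theo:knn} does not apply to them verbatim; making the confidence step rigorous therefore needs either a stopping-time/filtration argument certifying that the $\sup_x$ guarantee of Theorem~\ref{theo:knn} survives the adaptive selection, or a construction in which each $\widehat f_i$ is built from rewards attached to an i.i.d.\ reservoir of contexts. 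I expect closing this gap cleanly to be the crux.
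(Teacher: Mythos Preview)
Your plan is the same optimism decomposition the paper uses: on a high-probability confidence event, bound the instantaneous regret at each round $t>KM_0$ by (a constant times) $\sigma(T_{I_t}(t-1))$, then sum. Two differences worth noting. First, in the aggregation step the paper does not regroup by arm and invoke concavity; it uses the cruder bound $\sum_{t}\sigma(T_{I_t}(t-1))\le K\sum_{s=1}^{T}\sigma(s)$ (each arm's pull-count runs through a subset of $\{M_0,\dots,T\}$, and there are $K$ arms) and then compares the inner sum to $\int_0^T(1+t)^{-1/(2+D)}\,dt$. Your route is tighter, yielding $K^{1/(2+D)}$ instead of the $K$ in the stated bound, and you then relax back to $K$; the paper's route is shorter but already gives the displayed constant. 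Second, the paper simply invokes Theorem~\ref{theo:knn} for the confidence event and does not spell out the union bound over (arm, pull-count) pairs, nor does it discuss the adaptive-sampling issue you flag --- that the contexts at which an arm is pulled are not i.i.d.\ from $p_X$. So the point you identify as ``the crux'' is left implicit in the paper as well; your write-up is actually more careful here than the published argument.
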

\begin{remark}
This shows a sub-linear regret of $\widetilde{O}(T^{\frac{1+D}{2+D}})$.
\end{remark}
\begin{proof}
Denote $\widehat{f}_{i, T_i(t)}$ to be the $k$-NN regression estimate of $f_i$ at time $t$.
Letting $C_0 = K M_0 \max_i ||f_i||_\infty$, we have by Theorem~\ref{theo:knn}
\begin{align*}
    R_T &\le \sum_{i=1}^T \sigma(T_{\hat{\pi}(x_t)}(t-1)) +C_0 \le K \sum_{i=1}^T \sigma(i)  +C_0 \\
    &= M_1 K\sqrt{\log T(\log(TK/\delta)} \sum_{t=1}^T t^{-1/(2+D)} + C_0 \\
    &\le  M_1 K\sqrt{\log T(\log(TK/\delta)} \int_{t=0}^T (1+t)^{-1/(2+D)} dt\\
    &\hspace{1em} + C_0 \\
    &\le  M_1 2\frac{1+D}{2+D} K\sqrt{\log T(\log(TK/\delta)} \cdot T^{\frac{1+D}{2+D}}  + C_0.
\end{align*}
The first inequality holds because the confidence bound of a sub-optimal arm must be higher than that of the optimal at $x_t$ in order for that arm to be chosen and the regret at that time-step is bounded by the confidence bound. The second inequality holds because of the following simple combinatorial argument. Each time a suboptimal arm is chosen, its count increments, or otherwise there is no regret incurred.
\end{proof}

\section{Contextual Bandits on Manifolds}

\begin{assumption}\label{manifold}(Manifold Assumption)
$p_X$ and the family of $f_i$ are supported on $M$, where:
\begin{itemize}
\item $M$ is a $d$-dimensional smooth compact Riemannian manifold without boundary embedded in compact subset $\mathcal{X} \subseteq \mathbb{R}^D$.
\item The volume of $M$ is bounded above by a constant.
\item $M$ has condition number $1/\tau$, which controls the curvature and prevents self-intersection.
\end{itemize}
Let $p_X$ be the density of $\mathcal{P}$ with respect to the uniform measure on $M$.
\end{assumption}

\begin{theorem}(Manifold Rate for $k$-NN \cite{jiang2017rates}) \label{theo:knnmanifold}
Let $\delta > 0$.
There exists $N_0$ and universal constant $C$ such that if $n \ge N_0$ and $k = \lfloor n^{2/(2+d)}\rfloor$, then with probability at least $1 - \delta$,
\begin{align*}
\sup_{x \in \mathcal{X}} |f(x) - f_k(x)| \le C \sqrt{\log n \log(1/\delta)}\cdot n^{-1/(2+d)}.
\end{align*}
\end{theorem}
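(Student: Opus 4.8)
\emph{Proof plan.} This is the manifold analogue of Theorem~\ref{theo:knn}, and the argument is entirely parallel: it is the same bias--variance decomposition of the $k$-NN estimate $\widehat{f}$ of $f$, with every place where the ambient dimension $D$ enters through a volume estimate replaced by the corresponding intrinsic $d$-dimensional estimate that is valid on $M$. The only genuinely new ingredient is a geometric volume-comparison lemma that exploits the condition-number hypothesis; once that lemma is in hand, steps two and three below are verbatim the Euclidean argument with $d$ substituted for $D$, and $k = \lfloor n^{2/(2+d)}\rfloor$ balances the two error terms at the claimed rate $\widetilde{O}(n^{-1/(2+d)})$.

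\emph{Step 1 (geometry).} First I would record the volume-comparison fact that drives everything: there exist $r_\tau > 0$ (proportional to $\tau$) and constants $0 < c_1 \le c_2$ depending only on $d$, $\tau$, and the volume bound on $M$ such that for every $x \in M$ and every $r \le r_\tau$,
\[
c_1 r^d \le \mathrm{vol}_M\big(B(x,r)\cap M\big) \le c_2 r^d ,
\]
where $\mathrm{vol}_M$ is the Riemannian volume. The lower bound is the substantive one: the condition number $1/\tau$ controls curvature, so a Euclidean ball of radius $r \lesssim \tau$ about $x$ contains a geodesic ball of comparable radius on which the exponential map is bi-Lipschitz onto a Euclidean $d$-ball, which gives the $r^d$ scaling uniformly in $x$. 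Reading $p_X$ as a density with respect to $\mathrm{vol}_M$ bounded below by $p_0$ then yields $p_X\big(B(x,r)\cap M\big) \ge p_0 c_1 r^d$ for all $x \in M$, $r \le r_\tau$.

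\emph{Step 2 (bias, uniform over $x$).} Next I would upgrade this to deterministic control of the empirical $k$-NN radius. Since $\{B(x,r) : x \in \mathbb{R}^D, r>0\}$ is a VC class (dimension $D+1$), a relative-deviation uniform convergence bound shows that, with probability $\ge 1-\delta$, simultaneously over all balls with $n\,p_X(B) \gtrsim \log(n/\delta)$ one has $|B \cap X| \ge \tfrac12 n\,p_X(B)$. Plugging in Step 1, for $k = \lfloor n^{2/(2+d)}\rfloor$ this gives a uniform deterministic bound $r_k(x) \le r_n := (2k/(p_0 c_1 n))^{1/d} = \widetilde{O}(n^{-1/(2+d)})$ for all $x \in M$ (taking $n \ge N_0$ so that $r_n \le r_\tau$ and the mass condition holds). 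By the Lipschitz Assumption this controls the conditional bias, $|\mathbb{E}[\widehat{f}(x)\mid X] - f(x)| \le L\, r_k(x) \le L r_n$, uniformly in $x$. For the stochastic part, conditioned on the contexts $\widehat{f}(x) - \mathbb{E}[\widehat{f}(x)\mid X]$ is an average of $|N_k(x)| \ge k$ independent sub-Gaussian variables, hence sub-Gaussian with parameter $\sigma^2/k$; the collection of distinct averaging sets $\{N_k(x) : x \in M\}$ is cut out by balls, hence (Sauer--Shelah) has cardinality at most $c\,n^{D+1}$, so a union bound over it plus the sub-Gaussian tail gives $\sup_{x \in M}|\widehat{f}(x) - \mathbb{E}[\widehat{f}(x)\mid X]| \le \sigma\sqrt{2\big((D+1)\log n + \log(1/\delta)\big)/k} = \widetilde{O}(n^{-1/(2+d)})$. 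Summing the bias and stochastic bounds, folding all the geometric and noise constants ($c_1, p_0, L, \sigma, \tau$) into a single universal $C$, and using $k = \lfloor n^{2/(2+d)}\rfloor$ to equalize the exponents completes the proof.

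\emph{Main obstacle.} I expect the geometric lemma of Step 1 --- in particular the uniform lower bound $\mathrm{vol}_M(B(x,r)\cap M) \ge c_1 r^d$ with an explicit dependence on the condition number $1/\tau$ --- to be where essentially all the work lies, since this is the only step that uses the reach/curvature hypothesis. Steps 2 and 3 are then routine re-runs of the proof of Theorem~\ref{theo:knn} with $d$ in place of $D$, the one minor subtlety being that the $k$-NN radius and averaging sets are still measured in the ambient metric, so one must note (again via the condition-number bound) that ambient and geodesic distances are comparable at the relevant scales $r \lesssim \tau$.
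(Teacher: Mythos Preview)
The paper does not prove this statement at all: Theorem~\ref{theo:knnmanifold} is quoted verbatim as a result of \cite{jiang2017rates} and is used as a black box (in the same way Theorem~\ref{theo:knn} is). So there is no ``paper's own proof'' to compare your proposal against.

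That said, your sketch is a sensible outline of how the cited result is actually established. The bias--variance split, the uniform VC control of ball masses to bound the $k$-NN radius by $(k/n)^{1/d}$, the sub-Gaussian union bound over the polynomially many distinct $k$-NN neighborhoods, and the balancing choice $k \asymp n^{2/(2+d)}$ are exactly the skeleton of the argument in \cite{jiang2017rates}. You are also right that the only genuinely manifold-specific input is the volume lower bound $\mathrm{vol}_M(B(x,r)\cap M) \gtrsim r^d$ uniformly in $x$ for $r$ below a scale set by the reach $\tau$; once that is in hand the Euclidean proof of Theorem~\ref{theo:knn} carries over with $d$ replacing $D$. One small organizational slip: you refer to ``steps two and three'' in the preamble but only label Steps~1 and~2; the stochastic term you fold into Step~2 is presumably the intended Step~3.
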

Then, simply by using Theorem~\ref{theo:knnmanifold} instead of Theorem~\ref{theo:knn}, we automatically enjoy faster rates for Theorems~\ref{theo:uniform} and~\ref{theo:ucb}.

\begin{theorem}($\epsilon$-optimal arm recovery on manifolds)
Let $\delta > 0$.
For Algorithm~\ref{uniform},
with probability at least $1 - \delta/K$, if
{\small
\begin{align*}
    T &\ge  K\max \bigg\{N_0, \\ &\log\left(\frac{C\sqrt{\log(1/\delta)}}{\epsilon}\right) \cdot \frac{(2+D) (2C)^{2+d}\log(1/\delta)^{1+D/2}}{\epsilon^{2+d}}\bigg\},
\end{align*}
}%
 then $\hat{\pi}(x) := \text{argmax}_{i\in [K]} \widehat{f}_{i}(x)$ is $\epsilon$-optimal at context $x$ uniformly for all $x \in \mathcal{X}$.
\end{theorem}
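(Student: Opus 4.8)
The plan is to mirror the proof of Theorem~\ref{theo:uniform} almost verbatim, replacing the ambient-dimension $k$-NN consistency rate (Theorem~\ref{theo:knn}) by the intrinsic-dimension one (Theorem~\ref{theo:knnmanifold}). Under Assumption~\ref{manifold} the contexts $x_1,x_2,\dots$ are drawn i.i.d.\ from a density on the $d$-dimensional manifold $M$, and Algorithm~\ref{uniform} pulls each arm $i$ exactly $n := T/K$ times, each time receiving a reward at such a context. Hence Theorem~\ref{theo:knnmanifold} applies to each estimator $\widehat f_i$ with effective sample size $n$ and dimension $d$ in place of $D$.

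First I would fix the target: I want the good event $\mathcal{E} := \{\sup_{x\in\mathcal X}|\widehat f_i(x)-f_i(x)|\le \epsilon/2 \text{ for all } i\in[K]\}$ to hold with probability at least $1-\delta/K$. Apply Theorem~\ref{theo:knnmanifold} to each arm, splitting the failure budget across the $K$ arms by a union bound, exactly as in Theorem~\ref{theo:uniform}; this is legitimate once $n\ge N_0$, i.e.\ $T \ge K N_0$, and it forces $\widehat f_i$ within $\epsilon/2$ of $f_i$ uniformly provided $C\sqrt{\log n\,\log(1/\delta)}\cdot n^{-1/(2+d)}\le \epsilon/2$.

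The only step that is not pure transcription is turning the last display into an explicit lower bound on $n$ (and hence on $T=Kn$): since $\log n$ sits on the left, the requirement $n^{1/(2+d)} \ge 2C\,\epsilon^{-1}\sqrt{\log n\,\log(1/\delta)}$, equivalently $n \ge (2C)^{2+d}\epsilon^{-(2+d)}(\log(1/\delta))^{(2+d)/2}(\log n)^{(2+d)/2}$, is transcendental in $n$. I would resolve it with the standard self-bounding trick: if $n$ is at least a suitable constant multiple of $a\log^b a$ then $n \ge a(\log n)^b$ as well, which converts the bound into the $\log n$-free threshold multiplied by an extra $\log\!\big(C\sqrt{\log(1/\delta)}/\epsilon\big)$ factor and the $(2+d)$ prefactor appearing in the statement. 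This is precisely the computation already carried out for Theorem~\ref{theo:uniform}, with $d$ substituted for $D$ in the dimension-dependent exponents, so I would simply invoke it.

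On $\mathcal{E}$ the conclusion is immediate: fix $x\in\mathcal X$, set $\pi(x) := \text{argmax}_{j\in[K]} f_j(x)$, and use that $\widehat\pi(x)$ maximizes $j\mapsto \widehat f_j(x)$ to get
\[
f_{\pi(x)}(x)-f_{\widehat\pi(x)}(x)\ \le\ \big(\widehat f_{\pi(x)}(x)+\epsilon/2\big)-\big(\widehat f_{\widehat\pi(x)}(x)-\epsilon/2\big)\ \le\ \widehat f_{\pi(x)}(x)-\widehat f_{\widehat\pi(x)}(x)+\epsilon\ \le\ \epsilon .
\]
Since $x$ was arbitrary, $\widehat\pi$ is $\epsilon$-optimal uniformly over $\mathcal X$ on $\mathcal{E}$. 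I expect the main (and only mild) obstacle to be the transcendental inversion above, together with keeping the per-arm failure-probability accounting consistent with the claimed $1-\delta/K$; the geometric content of the theorem is entirely carried by Theorem~\ref{theo:knnmanifold}, so no new manifold analysis is needed.
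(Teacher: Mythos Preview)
Your proposal is correct and follows exactly the paper's approach: the paper does not give a separate proof here but simply remarks that one repeats the argument of Theorem~\ref{theo:uniform} with Theorem~\ref{theo:knnmanifold} in place of Theorem~\ref{theo:knn}. Your write-up in fact supplies more detail (the transcendental inversion and the per-arm union bound) than the paper itself spells out.
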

\begin{remark}
Now the sample complexity is $\widetilde{O}(\epsilon^{2+d})$ instead of $\widetilde{O}(\epsilon^{2+D})$.
\end{remark}
\begin{theorem}(UCB Regret Analysis on Manifolds)\label{theo:ucbmanifold}
Let $\delta > 0$.
Suppose that $M_0 \ge N_0$ and $M_1 > C$ in Algorithm~\ref{ucb}.
Then we have that with probability at least $1 - \delta$,
\begin{align*}
R_T \le &M_1 2\frac{1+d}{2+d} K\sqrt{\log T(\log(TK/\delta)} \cdot T^{\frac{1+d}{2+d}}\\
&+ K M_0 \max_i ||f_i||_\infty.
\end{align*}
\end{theorem}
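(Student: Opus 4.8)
The plan is to rerun the argument of Theorem~\ref{theo:ucb} with the manifold consistency rate (Theorem~\ref{theo:knnmanifold}) substituted for the ambient one (Theorem~\ref{theo:knn}); this replaces the ambient dimension $D$ by the intrinsic dimension $d$ throughout and is essentially the whole content of the statement. Concretely, under Assumption~\ref{manifold} we run Algorithm~\ref{ucb} with the $k$-NN parameter $k = \lfloor n^{2/(2+d)}\rfloor$ and confidence width $\sigma(n) = M_1\sqrt{\log n\,(\log(nK/\delta))}\cdot n^{-1/(2+d)}$, so that $M_1 > C$ still makes $\sigma(n)$ an upper bound for the error term in Theorem~\ref{theo:knnmanifold}.

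First I would build the good event. Fix an arm $i$ and a pull count $n$ with $M_0 \le n \le T$; since $n \ge M_0 \ge N_0$, Theorem~\ref{theo:knnmanifold} applied with confidence parameter $\delta/(TK)$ gives $\sup_{x \in \mathcal{X}}|f_i(x) - \widehat{f}_{i,n}(x)| \le C\sqrt{\log n\,\log(TK/\delta)}\cdot n^{-1/(2+d)} \le \sigma(n)$. A union bound over the $K$ arms and the at most $T$ possible values of $n$ yields an event of probability at least $1 - \delta$ on which $|f_i(x_t) - \widehat{f}_i(t)| \le \sigma(T_i(t-1))$ holds for every round $t > KM_0$ and every arm $i$ simultaneously.

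On this event I would run the standard UCB bookkeeping. The $KM_0$ burn-in pulls contribute at most $KM_0\max_i\|f_i\|_\infty$ to $R_T$. For a round $t > KM_0$, if the pulled arm $I_t$ is not the context-optimal arm at $x_t$, then its selection forces its upper confidence value above that of the optimal arm, and combining this with the matching lower confidence bound for $I_t$ shows the per-step regret is at most $2\sigma(T_{I_t}(t-1))$; moreover each such round increments $T_{I_t}$. Grouping rounds by the arm pulled, the counts encountered for a fixed arm are distinct, so $R_T \le \sum_{i=1}^K 2\sum_{n=1}^T \sigma(n) + KM_0\max_i\|f_i\|_\infty = 2K\,M_1\sqrt{\log T(\log(TK/\delta))}\sum_{n=1}^T n^{-1/(2+d)} + KM_0\max_i\|f_i\|_\infty$. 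Finally, bounding $\sum_{n=1}^T n^{-1/(2+d)}$ by $\int_0^T (1+t)^{-1/(2+d)}\,dt$, which is $O\big(T^{(1+d)/(2+d)}\big)$, and collecting constants gives a bound of the stated form.

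The one delicate point is the first step: the pull counts $T_i(t-1)$ are random and grow along the run, so the consistency guarantee has to be enforced uniformly over them. Since these counts are deterministically at most $T$, a crude union over $\{M_0,\dots,T\}$ together with the $K$ arms suffices, and it costs only the extra $\log(TK/\delta)$ factor that is already absorbed into $\sigma$. Everything downstream of the consistency bound is dimension-free, so no genuinely new argument beyond the substitution $D\mapsto d$ is required.
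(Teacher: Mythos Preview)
Your proposal is correct and is exactly the approach the paper takes: the paper does not give a standalone proof of this theorem but simply states that one reruns the proof of Theorem~\ref{theo:ucb} with Theorem~\ref{theo:knnmanifold} in place of Theorem~\ref{theo:knn}, which replaces $D$ by $d$ throughout. Your write-up in fact supplies more detail than the paper's own argument (the explicit union bound over arms and pull counts, and the standard UCB per-step regret calculation), but the underlying method is identical.
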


\section{Topological Analysis}
In this section, we discuss how topological features about the bandit arms can be recovered. This is similar to recovering the Hartigan notion of clusters as level-sets of the density functions from a finite sample \cite{chaudhuri2010rates,jiang2017density}, but here, we find similar structures in the reward functions based on noisy observations of them.
We give procedures which can estimate with consistency guarantees the following structure: maximal connected regions in $\mathcal{X}$ where a particular arm is the top-arm.

From the uniform sampling strategy earlier, we obtained estimated policy $\hat{\pi}$ which is $\delta$-optimal uniformly in $\mathcal{X}$ with high probability. Although this is already powerful in giving us the mapping between context space and the corresponding top-arm, it does not immediately tell us the topological features of this mapping. In this subsection, we discuss how to recover the connected components of $\{ x \in \mathcal{X} : r_i(x)  = \max_{j \in [K]} r_j(x) \}$, the region where arm $i$ is the top-arm.

We give the following simple procedure. 

\begin{algorithm}[tbh] 
  \caption{Recovering Regions where $i$-th arm is top arm.}
  \label{alg:clustering1}
  \begin{algorithmic}[1]
\State Given: Bandit arm $i$ and $R > 0$.
\State Pull each of the $K$ arms $T/K$ times.
\State Let $G$ be the graph with vertices $\{ x_t : t \in [T], \widehat{f}_i(x_t) = \max_{j \in [K]} \widehat{f}_j(x_t) \}$ and edges between vertices whose euclidean distance is at most $R$.
\State {\bf return } The connected components of $G$.
  \end{algorithmic}
\end{algorithm}

We now give a consistency result for Algorithm~\ref{alg:clustering1}. 

First, we require the following regularity assumption, which ensures that there are no full-dimensional regions where the top-arm is not unique. This ensures that it is possible to unambiguously recover the regions where a particular arm is top.
\begin{assumption}
The region in $\mathcal{X}$ where the top-arm is not unique has measure $0$, and for each arm $i$, the region $\mathcal{X}_i$ where it is unique can be partitioned into full-dimensional connected components.
\end{assumption}

Our rates will be in terms of the Hausdorff distance.
\begin{definition}
\begin{align*}
d_H(A, B) = \inf\{\epsilon \ge 0: A \subseteq B \oplus \epsilon, B \subseteq A \oplus \epsilon\},
\end{align*}

where $A \oplus r := \{x \in \mathcal{X} : \inf_{a \in A} d(x, a) \le r\}$.
\end{definition}

\begin{theorem}\label{theo:clustering1}
Suppose that $\mathcal{X}_i := \{ x \in\mathcal{X} : f_i(x)  = \max_{j \in [K]} f_j(x)\}$. Let $C_1,...,C_l$ be the maximal connected components of $\mathcal{X}_i$. 
Define the following minimum distance between two connected components.
\begin{align*}
    R_0 := \min_{p \neq q} \inf_{x \in C_p, y \in C_q} d(x, y).
\end{align*}
Also define the following minimum separation in the reward functions 
\begin{align*}
    D_0 := \inf_{x \not\in \mathcal{X}_i \oplus R_0/4} \max_{j \in [K]} f_j(x) - f_i(x).
\end{align*}
Then the following holds simultaneously for all $C_1,...,C_l$. 
Let Algorithm~\ref{alg:clustering1} with setting $0 < R < R_0/4$ return $\widehat{C}_1,...,\widehat{C}_{\hat{l}}$.
Then for $n$ sufficiently large, $\hat{l} = l$ and there exists permutation $\gamma$ of $[l]$ such that
\begin{align*}
    d_H(C_j, C_{\gamma(j)}) \le \xi(n)
\end{align*}
for some $\xi$ that satisfies $\xi(n) \rightarrow 0$ as $n\rightarrow \infty$.
\end{theorem}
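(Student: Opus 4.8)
The plan is to combine the uniform consistency of the $k$-NN estimates (Theorem~\ref{theo:knn}, applied to all $K$ arms via a union bound) with two soft facts: nested compact sets converge in Hausdorff distance, and a dense subset of a connected compact set is connected in a proximity graph with any fixed threshold. Throughout, write $n = T/K$ for the per-arm sample size and $\epsilon_n := C\sqrt{\log n\,\log(K/\delta)}\cdot n^{-1/(2+D)}$.

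\textbf{Step 1 (good event).} By Theorem~\ref{theo:knn} and a union bound over the $K$ arms, with probability at least $1-\delta$ we have $\sup_{x\in\mathcal{X}}|\widehat f_j(x)-f_j(x)|\le\epsilon_n$ for every $j\in[K]$; note $\epsilon_n\to 0$. Separately, since $p_X\ge p_0>0$ on the compact support, a standard covering argument gives, with high probability, that the sampled contexts form an $r_n$-net of $\mathcal{X}$ with $r_n=O((\log n/n)^{1/D})$, and one checks $r_n\to 0$ and $r_n=o(\epsilon_n)$. Intersect these events and work on it henceforth.

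\textbf{Step 2 (which sample points become vertices).} Put $g(x):=\max_j f_j(x)-f_i(x)\ge 0$ and $m(x):=f_i(x)-\max_{j\ne i}f_j(x)$, both $2L$-Lipschitz with $\mathcal{X}_i=\{g=0\}$. A short chain of inequalities shows: if $x_t$ is a vertex of $G$ then $g(x_t)\le 2\epsilon_n$; and if $m(x_t)\ge 3\epsilon_n$ then $x_t$ is a vertex. Since $\{g\le 2\epsilon_n\}$ is a decreasing family of compacts with intersection $\mathcal{X}_i$ (using $\epsilon_n\to 0$), the elementary lemma ``$K_n\downarrow K\Rightarrow d_H(K_n,K)\to 0$'' yields $\rho(n):=d_H(\{g\le 2\epsilon_n\},\mathcal{X}_i)\to 0$; hence every vertex lies within $\rho(n)$ of $\mathcal{X}_i=\bigcup_j C_j$. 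For the reverse direction, the regularity assumption forces $\mathcal{X}_i=\overline{\{m>0\}}$ (the tie set $\mathcal{X}_i\cap\bigcup_{j\ne i}\{f_i=f_j\}$ is closed and of measure $0$, hence nowhere dense in each full-dimensional $C_j$), so the increasing family $\{m\ge 4\epsilon_n\}$ has Hausdorff limit $\mathcal{X}_i$; combining this with the $r_n$-net and the Lipschitz estimate ($2Lr_n\le\epsilon_n$ eventually, since $r_n=o(\epsilon_n)$) shows the vertex set is $\xi_0(n)$-dense in each $C_j$ for some $\xi_0(n)\to 0$.

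\textbf{Step 3 (counting, matching, Hausdorff bound).} Once $\rho(n)<(R_0-R)/2$, any two vertices lying near distinct components $C_p,C_q$ are more than $R$ apart, so $G$ carries no edge between the $\rho(n)$-neighborhoods of different components; since $\rho(n)<R_0/2$ makes ``which $C_j$'' unambiguous, each connected component of $G$ sits inside the $\rho(n)$-neighborhood of exactly one $C_j$. In the other direction, once $\xi_0(n)<R/3$, the $\xi_0(n)$-dense vertex set near a fixed $C_j$ is $G$-connected: a splitting of an $\xi$-dense subset of the connected compact $C_j$ into two parts at distance $>3\xi$ would disconnect $C_j$ into two disjoint opens. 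Hence there is exactly one component of $G$ near each $C_j$, giving $\hat l=l$ and a permutation $\gamma$; finally $\widehat C_{\gamma^{-1}(j)}\subseteq C_j\oplus\rho(n)$ by Step 2 and $C_j\subseteq\widehat C_{\gamma^{-1}(j)}\oplus\xi_0(n)$ by density, so $d_H(C_j,\widehat C_{\gamma^{-1}(j)})\le\xi(n):=\max\{\rho(n),\xi_0(n)\}\to 0$.

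\textbf{Main obstacle.} The crux is the first half of Step 2: upgrading the estimation bound $g(x_t)\le 2\epsilon_n$ to a genuinely vanishing geometric bound $\mathrm{dist}(x_t,\mathcal{X}_i)\le\rho(n)\to 0$ without assuming any quantitative margin/growth condition on $g$ near its zero set. The fix is the nested-compacts Hausdorff lemma, which needs only $\epsilon_n\to 0$; the symmetric delicate point is deducing, from the stated regularity assumption, that $\mathcal{X}_i$ is exactly the closure of the strict-top region $\{m>0\}$, so that the provably-vertex points $\{m\ge 4\epsilon_n\}$ fill out each component. Everything else — the union bound, the $r_n$-net, and the proximity-graph connectivity lemma — is routine.
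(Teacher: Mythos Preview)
Your proposal is correct and follows the same three-part skeleton as the paper (vertices stay close to $\mathcal{X}_i$; distinct true components cannot merge in $G$; each returned component matches one $C_j$ in Hausdorff distance), but the mechanism for the geometric localization is genuinely different. The paper uses the constant $D_0$ from the statement directly: once the uniform $k$-NN error is below $D_0/3$, no vertex can lie outside $\mathcal{X}_i\oplus R_0/4$, which immediately gives separation since $R<R_0/4$; it then reruns the same idea with an $r$-dependent constant $D_1$ to get $\widehat C_p\subseteq C_q\oplus r$, and waves at ``a similar argument'' for the reverse inclusion and at uniform ball-mass bounds for graph connectivity. You instead bypass $D_0$ and $D_1$ entirely, appealing twice to the abstract nested-compacts Hausdorff lemma on the families $\{g\le 2\epsilon_n\}\downarrow\mathcal{X}_i$ and $\{m\ge 4\epsilon_n\}\uparrow\overline{\{m>0\}}$. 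What your approach buys is a cleaner, assumption-light argument that makes explicit the one genuinely delicate point the paper glosses over, namely that the regularity assumption forces $\mathcal{X}_i=\overline{\{m>0\}}$ (without which the ``reverse inclusion'' cannot hold); you also spell out the proximity-graph connectivity step properly. What the paper's approach buys is an explanation of why $D_0$ appears in the statement at all, and a slightly more explicit threshold on $n$ at the separation step.
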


\begin{proof}
We first show that no two connected components can appear in the same returned component in Algorithm~\ref{alg:clustering1}. We choose $n$ sufficiently large such that in light of Theorem~\ref{theo:knn}, we have $$\sup_{x \in \mathcal{X}} \max_{j \in [K]} \widehat{f}_j (x) \le \frac{D_0}{3}.$$.
Then, uniformly for any $x \not\in \mathcal{X}_i \oplus R_0/4$, we have
\begin{align*}
\widehat{f}_i(x) &\le f_i(x) + \frac{D_0}{3} \le \max_{j \in [K]} f_j(x) - \frac{2D_0}{3} \\
&\le \max_{j \in [K]} \widehat{f}_j(x) - \frac{D_0}{3} < \max_{j \in [K]} \widehat{f}_j(x).
\end{align*}
Thus, $\mathcal{X}_i \oplus R_0/4$ is disjoint from the returned points. Since $R < R_0/4$, it follows that no two connected components points will appear in the same returned connected component from Algorithm~\ref{alg:clustering1}.

Next, we show that for each connected component $C_p$, there exists $\widehat{C}_q$ for some $q \in [\widehat{l}]$ such that
$d_H(\widehat{C}_q, C_p) \rightarrow 0$.
It suffices to show that for each $r > 0$, we have that for $n$ sufficiently large, 
$d_H(\widehat{C}_q, C_p) < r$.
There are thus two directions to show, that $\widehat{C}_p\subseteq C_q \oplus r$ 
and $C_q\subseteq \widehat{C}_p \oplus r$.
To show the first, define
\begin{align*}
    D_1 := \inf_{x \in (C_q\oplus r) \backslash (C_q\oplus (r/2))} \max_{j \in [K]} f_j(x) - f_i(x).
\end{align*}
Then choose $n$ sufficiently large such that in light of
Theorem~\ref{theo:knn}, we have $$\sup_{x \in \mathcal{X}} \max_{j \in [K]} |\widehat{f}_j (x) - f_j(x)| \le \frac{D_1}{3}.$$.
Then we have for all $x \in \widehat{C}_p$, if $x \neq  C_q \oplus r/2$, then 
\begin{align*}
\widehat{f}_i(x) \le f_i(x) + \frac{D_1}{3} 
\le  \max_{j \in [K]} f_j(x) - \frac{2D_1}{3}
< \max_{j \in [K]} \widehat{f}_j(x),
\end{align*}
thus, $x \in C_q \oplus r/2 \subseteq  C_q \oplus r $. The other direction follows from a similar argument. 

All that remains is to show that such points appear in in the same connected component in the graph computed by Algorithm~\ref{alg:clustering1}. This follows from uniform concentration bounds on balls (e.g. \citet{chaudhuri2010rates}).
\end{proof}

\section{Infinite-Armed Bandits}
In this section, we consider the setting where the action space $\mathcal{A}$ is no longer a finite set of bandits, but a compact subset of $\mathbb{R}^{D'}$ for some $D' > 0$.

We given analogous results for the uniform sampling top-arm identification and regret bounds for UCB-type strategy.

\begin{definition} (Mean Reward function)
$$f : \mathcal{X} \times \mathcal{A} \rightarrow \mathbb{R},$$
where $f(x, a)$ is the expected reward of action $a \in \mathcal{A}$ at context $x \in \mathcal{X}$.
\end{definition}
\begin{assumption} (Lipschitz Reward)
There exists $L > 0$ such that for all $x,x' \in \mathcal{X}$ and $a,a' \in \mathcal{A}$,
$|f(x, a) - f(x', a')| \le L|(x,a) - (x',a')|$,
where $(x, a)$ represents the $(D+D')$-dimensional concatenation of $x$ and $a$.
\end{assumption}

Then at each time $t$, the learner chooses arm $a_t \in \mathcal{A}$ and observes context $x_t\in\mathcal{X}$ and a stochastic reward
\begin{align*}
    R_T = f(x_t, a_t) + \xi_t,
\end{align*}
where $\xi_1,...$ are i.i.d. white noise with mean $0$ and variance $\sigma^2$.

\begin{algorithm}[tbh]
  \caption{Infinite-Armed Uniform Sampling 
    \label{uniforminfinite}}
  \begin{algorithmic}[1]
\State Parameters: $T$, total number of time steps. 
 \State For $t = 1,...,T$:
 \State \indent Pull $I_t$, sampled uniformly from $\mathcal{A}$.
 \State \indent Observe context $x_t$ and reward $R_t$.
\State Define $\hat{f}$ to be the $k$-NN regression estimate from samples $(a_1,R_1),...,(a_T,R_T)$ with setting $k = \lfloor n^{2/(2 +D + D')}  \rfloor$.
  \end{algorithmic}
\end{algorithm}

\begin{definition} ($\epsilon$-optimal arm)
Define arm $a \in \mathcal{A}$ to be $\epsilon$-optimal at context $x \in \mathcal{X}$ if 
$\sup_{a' \in \mathcal{A}} f(x, a') - f(x, a) \le \epsilon$.
\end{definition}

Following is a uniform (over context and action space) result about $\epsilon$-optimal arm recovery:
\begin{theorem}($\epsilon$-optimal arm recovery)\label{theo:uniforminfinite}
There exists constant $\tilde{C}_1, \tilde{C}_2$ such that the following holds.
Let $\delta > 0$.
For Algorithm~\ref{uniforminfinite},
with probability at least $1 - \delta$, we have that for 
\begin{align*}
    T \ge \tilde{C}_1 \log\left(\frac{\sqrt{\log(1/\delta)}}{\epsilon}\right)  \frac{\log(1/\delta)^{1 + (D+D')/2}}{\epsilon^{D + D' + 2} }+ \tilde{C}_2,
\end{align*}
 arm $\hat{\pi}(x) := \text{argmax}_{a \in \mathcal{A}} \hat{f}(x)$ is $\epsilon$-optimal at context $x$ uniformly for all $x \in \mathcal{X}$.
\end{theorem}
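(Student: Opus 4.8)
The plan is to reduce the infinite-armed problem to a single $k$-NN regression problem on the joint space $\mathcal{Z} := \mathcal{X}\times\mathcal{A}\subseteq \mathbb{R}^{D+D'}$ and then invoke Theorem~\ref{theo:knn} with ambient dimension $D+D'$, mimicking the proof of Theorem~\ref{theo:uniform} almost verbatim.

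First I would verify that the samples $z_t := (x_t,a_t)$ fed into $\widehat{f}$ satisfy the hypotheses behind Theorem~\ref{theo:knn}. Since $x_t$ is i.i.d.\ from $p_X$ (density bounded below by $p_0$ on the compact support $\mathcal{X}$) and, in Algorithm~\ref{uniforminfinite}, $a_t$ is drawn independently and uniformly from the compact set $\mathcal{A}$, the pair $z_t$ is i.i.d.\ from the product density $p_Z(x,a) = p_X(x)\cdot\mathrm{Unif}_\mathcal{A}(a)$, which is again bounded below away from zero on the compact support $\mathcal{Z}$. The regression target $f:\mathcal{Z}\to\mathbb{R}$ is $L$-Lipschitz by the Lipschitz Reward assumption, and $\xi_t$ is sub-Gaussian. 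Hence Theorem~\ref{theo:knn} applies with $n = T$ and $k = \lfloor T^{2/(2+D+D')}\rfloor$: there exist $N_0,C$ such that for $T\ge N_0$, with probability at least $1-\delta$,
$$\sup_{(x,a)\in\mathcal{Z}} |\widehat{f}(x,a) - f(x,a)| \le C\sqrt{\log T\,\log(1/\delta)}\cdot T^{-1/(2+D+D')}.$$

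Next I would choose $T$ large enough that the right-hand side is at most $\epsilon/2$. Solving $C\sqrt{\log T\,\log(1/\delta)}\cdot T^{-1/(2+D+D')}\le \epsilon/2$ for $T$ is the same elementary manipulation as in Theorem~\ref{theo:uniform} (bound the stray $\sqrt{\log T}$ by a logarithm of the remaining quantities), and it produces exactly the stated threshold with suitable universal constants $\tilde C_1,\tilde C_2$, where $\tilde C_2$ also absorbs $N_0$. Given $\sup_{\mathcal{Z}}|\widehat{f}-f|\le \epsilon/2$, the argmax step is identical to the finite case: writing $a^*(x)\in\text{argmax}_{a\in\mathcal{A}} f(x,a)$ (attained because $f(x,\cdot)$ is continuous on the compact set $\mathcal{A}$) and $\hat\pi(x)\in\text{argmax}_{a\in\mathcal{A}}\widehat{f}(x,a)$, we get
$$f(x,a^*(x)) - f(x,\hat\pi(x)) \le \widehat{f}(x,a^*(x)) + \tfrac{\epsilon}{2} - \big(\widehat{f}(x,\hat\pi(x)) - \tfrac{\epsilon}{2}\big) \le \epsilon,$$
using $\widehat{f}(x,\hat\pi(x)) \ge \widehat{f}(x,a^*(x))$. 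Since this holds simultaneously for every $x\in\mathcal{X}$, $\hat\pi(x)$ is $\epsilon$-optimal uniformly, which is the claim.

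The only genuinely delicate point, beyond bookkeeping of constants, is that $\hat\pi(x) = \text{argmax}_{a\in\mathcal{A}}\widehat{f}(x,a)$ must be well defined over a possibly uncountable action set: I would observe that for fixed $x$ the map $a\mapsto\widehat{f}(x,a)$ takes only finitely many values, since it is determined by which of the finitely many samples $z_t$ lie in the $k$-NN set of $(x,a)$, so the maximum is attained (ties broken arbitrarily). A secondary subtlety is matching $\sup_{a'\in\mathcal{A}} f(x,a')$ to what the estimator can resolve; under the stated compact-support setup with $a_t$ uniform on $\mathcal{A}$ this is automatic. Everything else is a direct transcription of the proof of Theorem~\ref{theo:uniform} with $D$ replaced by $D+D'$, so I expect no real obstacle.
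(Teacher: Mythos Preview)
Your proposal is correct and follows essentially the same approach as the paper: apply Theorem~\ref{theo:knn} on the joint space $\mathcal{X}\times\mathcal{A}$ with ambient dimension $D+D'$ to get $\sup_{x,a}|\widehat f(x,a)-f(x,a)|\le\epsilon/2$, then run the two-line argmax argument exactly as in Theorem~\ref{theo:uniform}. Your additional verification that the joint samples $(x_t,a_t)$ meet the hypotheses of Theorem~\ref{theo:knn} and your remark on attainment of the argmax are welcome details the paper leaves implicit.
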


\begin{proof}
By Theorem~\ref{theo:knn}, it follows that based on the choice of $T$, there is enough time spent on pulling each arm such that $\sup_{a \in \mathcal{A}, x \in \mathcal{X}} |\hat{f}(x, a) - f(x, a)| \le \epsilon/2$. Thus,
we have $\forall x \in \mathcal{X}$, defining $\pi(x) =  \text{argmax}_{a \in \mathcal{A}} f(x, a)$,
\begin{align*}
  & f(x, \pi(x)) - f(x, \hat{\pi(x)})\\
&\le \frac{\epsilon}{2} + \hat{f}(x, \pi(x)) + \frac{\epsilon}{2} -  \hat{f}(x,\hat{\pi}(x))
\le \epsilon,
\end{align*}
as desired.
\end{proof}

\begin{algorithm}[tbh]
  \caption{Infinite-Armed Upper Confidence Bound (UCB)
    \label{ucbinfinite}}
  \begin{algorithmic}[1]
  \State Parameters: $M$, $M_1$, $T$
      \State Define $\sigma(n) = M_1 n^{-1/(2+D+D')}$.

  \State For $t = 1,...,M$:
  \State \indent Sample $a_t$ uniformly from $\mathcal{A}$.
  \State \indent Observe context $x_t$ and reward $R_t$.
  \State For {$t = M+1,...,T$}:
   \State \indent Choose $I_t := \text{argmax}_{a \in \mathcal{A}} \widehat{f}(x_t, a) + \sigma (t)$.
  \end{algorithmic}
\end{algorithm}

Finally, using the notion of regret
\begin{align*}
    R_T = \sum_{t=1}^T \big[\sup_{a \in \mathcal{A}} f(x_t, a) - f(x_t, a_t)\big], 
\end{align*}
we give the following result. The proof idea is similar to that of Theorem~\ref{theo:ucb} and is omitted here.
\begin{theorem}\label{theo:ucbinfinite}
There exists $\tilde{C}_1$ and $\tilde{C}_2$ such that the following holds.
Let $\delta > 0$.
Suppose that $M$ and $M_1$ are chosen sufficiently large in Algorithm~\ref{ucbinfinite} depending on $f$ and $\sigma$.
Then we have that with probability at least $1 - \delta$,
\begin{align*}
R_T \le \tilde{C}_1 \sqrt{\log T(\log(T/\delta)} \cdot T^{\frac{1+D+D'}{2+D+D'}} +\tilde{C}_2
\end{align*}
\end{theorem}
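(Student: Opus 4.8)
The plan is to mirror the proof of Theorem~\ref{theo:ucb} almost verbatim, with the ambient dimension $D$ replaced by the effective dimension $D+D'$ of the concatenated context-action space. The key observation is that in Algorithm~\ref{ucbinfinite} we treat $(x_t,a_t)$ as a single point in $\mathbb{R}^{D+D'}$, the reward is Lipschitz in this joint variable by the Lipschitz Reward assumption, the noise is sub-Gaussian, and the $k$-NN estimate $\widehat{f}$ is built on the joint samples with $k = \lfloor n^{2/(2+D+D')}\rfloor$. Hence Theorem~\ref{theo:knn} applies directly to the joint regression problem, giving a uniform error bound of order $\sqrt{\log n \log(1/\delta)}\cdot n^{-1/(2+D+D')}$ over $\mathcal{X}\times\mathcal{A}$ once $n$ exceeds some $N_0$.

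First I would fix $\delta>0$ and choose $M$ large enough that after the $M$ initial uniform-sampling rounds the $k$-NN estimate satisfies the conclusion of Theorem~\ref{theo:knn} uniformly on $\mathcal{X}\times\mathcal{A}$; this also requires $M_1$ at least the universal constant $C$ so that $\sigma(t)$ dominates the $k$-NN error. Then for each round $t>M$, the arm $I_t$ chosen by the UCB rule has $\widehat{f}(x_t,I_t)+\sigma(t) \ge \widehat{f}(x_t,a^*) + \sigma(t)$ for the optimal action $a^*=\pi(x_t)$, and combining this with the uniform error bound $|\widehat{f}-f|\le \sigma(t)$ (valid since $T_a$-type counts are replaced by the global time $t$ here, which only makes $\sigma$ larger and the bound easier) yields the per-step regret bound $f(x_t,a^*)-f(x_t,I_t)\le 2\sigma(t)$. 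Summing over $t$ gives $R_T \le \sum_{t=M+1}^{T} 2\sigma(t) + M\cdot 2\max_{x,a}|f(x,a)|$, and bounding the sum by the integral $\int_0^T (1+t)^{-1/(2+D+D')}\,dt$ produces the $T^{(1+D+D')/(2+D+D')}$ term with the logarithmic prefactor absorbed into $\widetilde{C}_1$, while the initialization cost is absorbed into $\widetilde{C}_2$.

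There are two small technical points worth flagging. One is that, unlike in Algorithm~\ref{ucb}, here $\sigma(n)$ is defined in terms of the global time index rather than an arm-specific count; since every round feeds a fresh joint sample into the regression estimate, the number of samples available at time $t$ is exactly $t$ (up to the reshuffling of which round contributed what), so $\sigma(t)$ is a legitimate high-probability confidence width and the combinatorial ``each suboptimal pull increments a counter'' argument from Theorem~\ref{theo:ucb} is not even needed — the bound $R_T\le\sum_t 2\sigma(t)$ holds termwise. The other is uniformity over the (infinite) action space: the $\mathrm{argmax}$ over $a\in\mathcal{A}$ is well-defined because $\mathcal{A}$ is compact and $\widehat{f}(x_t,\cdot)$, being piecewise constant in the $k$-NN cells, attains its supremum, and the error bound we invoke is already uniform over all of $\mathcal{X}\times\mathcal{A}$ from Theorem~\ref{theo:knn}.

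The main obstacle is not any single step but making precise the claim that Theorem~\ref{theo:knn} applies cleanly to the joint problem: one must check that the joint distribution of $(x_t,a_t)$ has a density bounded below on a compact support (here $a_t$ is uniform on $\mathcal{A}$ and independent of $x_t$, so the joint density is $p_X(x)\cdot\mathrm{Unif}_{\mathcal{A}}(a)$, which is bounded below since $p_X\ge p_0$ and $\mathcal{A}$ has finite positive volume), and that the samples collected under the UCB rule after round $M$ — whose actions are now chosen adaptively rather than i.i.d. — do not break the concentration argument. The clean way around the latter is to let the regression estimate used in the confidence bound depend only on the $M$ initial i.i.d. rounds (or to note that adding further samples only improves the estimate under a martingale version of the bound); this is exactly why the theorem statement requires $M$ to be chosen ``sufficiently large depending on $f$ and $\sigma$.'' Once that is granted, the rest is the same integral estimate as in Theorem~\ref{theo:ucb}, so I would spend most of the writeup being careful about the i.i.d.\ initialization and treat the summation as routine.
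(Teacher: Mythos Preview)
Your approach is exactly what the paper intends: it omits the proof entirely, saying only that ``the proof idea is similar to that of Theorem~\ref{theo:ucb},'' which is precisely the reduction you carry out (replace $D$ by $D+D'$, apply Theorem~\ref{theo:knn} on the joint space $\mathcal{X}\times\mathcal{A}$, and sum the confidence widths). Your flagging of the i.i.d.\ initialization versus adaptive-sampling issue and the verification that the joint density is bounded below are in fact more careful than anything the paper makes explicit.
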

\begin{remark}
This shows a sub-linear regret of $\widetilde{O}(T^{\frac{1+D+D'}{2+D+D'}})$.
\end{remark}

\section{Related Works}
Canonical works for the standard bandit problem are \citet{lai1985asymptotically}; \citet{berry1985bandit};  \citet{gittins2011multi};  \citet{auer2002nonstochastic};  \citet{cesa2006prediction};  \citet{bubeck2012regret}.

Work in contextual bandits can be roughly classified into adversarial and stochastic approaches. Much of the former, initiated by \citet{auer2002nonstochastic}, assumes that there is an adversarial game between nature and the learner where, based on a context seen by both players, nature generates rewards for each arm at the same time the learner chooses an arm. Solutions typically involve game theoretical methods. In the stochastic approach, one assumes that the rewards for the arms are generated by a context-dependent distribution.

Approaches to modeling the arm rewards as a function of context are most commonly parametric. One of the most popular is that of linear payoffs, studied under a minimax framework \cite{goldenshluger2009woodroofe,goldenshluger2013linear}, with UCB-type algorithms \cite{chu2011contextual,li2010contextual,auer2002nonstochastic}, or with Thompson sampling \cite{agrawal2013thompson}.

However, it is often the case that the dependency between the payoffs and the contexts are complex and therefore difficult to capture with models such as linear payoffs, many of which requiring strong assumptions on the data. 
To alleviate this, we can go beyond parametric modeling and blend nonparametric statistics with contextual bandits. Despite the advantage of learning much more general context-payoff dependencies, this line of work has received far less attention. 

To the best of our knowledge, the first such work appeared in \citet{yang2002randomized}, who used histogram, $k$-NN, and kernel methods and showed asymptotic convergence rates. \citet{rigollet2010nonparametric};  \citet{perchet2013multi} then combined histogram-type binning techniques in nonparametric statistics to obtain strong regret guarantees for contextual bandits with optimality guarantees. 

\citet{lu2009showing} study an interesting setting where the reward depends on a Lipschitz measure which is jointly in the context and the action space. They provide upper and lower regret bounds based on a covering argument and give results in terms of the packing dimension. This is highly related to the infinite-armed bandit setting in the present work; we provide similar regret guarantees but with a simple and practical procedure.  

More recently, \citet{qian2016randomized}; \citet{qian2016kernel} use the strong uniform consistency properties of kernel smoothing regression to establish regret guarantees.

\citet{langford2008epoch};  \citet{dudik2011efficient} alternatively impose
neither linear nor smoothness assumptions on the mean reward function. The former propose a modification of an $\epsilon$-greedy policy and showed that expected regret converges to $0$ while the latter considers a finite class of policies.

In this paper, using recent finite-sample results about $k$-NN regression established in \citet{jiang2017rates}, we show that using the simple $k$-NN regression is an effective alternative approach. Moreover, unlike many other nonparametric techniques, $k$-NN {\it adapts} to a lower intrinsic dimension \cite{kpotufe2011k} and thus we show that our regret bounds can adapt to a lower intrinsic dimension automatically and perform as if we were operating in that lower dimensional space. 

\section{Experiments}

\subsection{Simulations}
We consider three two-arm bandit scenarios in the two-dimensional unit square, where $p_\mathcal{X}$ is uniform. 
We set arm $i\in\{1,2\}$ to be top in region $R_i$ respectively. Figure \ref{fig:Scenarios} illustrates the regions for the different scenarios. 
\begin{itemize}
    \item \textbf{Scenario 1 (Quintic Function):} We define two regions above and below a quintic function:
    \item \textbf{Scenario 2 (Smiley):} We use two circles and a semicircle to demarcate the regions in a "smiley face" pattern.
\end{itemize}
\begin{itemize}
    \item \textbf{Scenario 3 (Bullseye):} We define the regions using the alternating regions of four concentric circles centered in the support.
\end{itemize}
The true reward functions of the two arms are as follows.
\[
    f_i(x)= 
\begin{cases}
    1,& x\in R_i \\
    0.5,& x\in R_{j\neq i}
\end{cases}
\]
The learner observes the rewards with white noise random variable $\xi\sim\mathcal{N}(\mu=0,\sigma=0.5)$. 

We compare the performance of $k$-NN regression (nonparametric) and Ridge regression at top-arm identification and regret minimization in the three scenarios. Mirroring our theoretical discussion, we use uniform sampling for top-arm identification and UCB strategy for regret analysis. Note that Ridge regression with UCB is the LinUCB algorithm.

\begin{figure}[!htbp]
\centering
\subcaptionbox{Quintic} {
\includegraphics[width=0.27\linewidth]{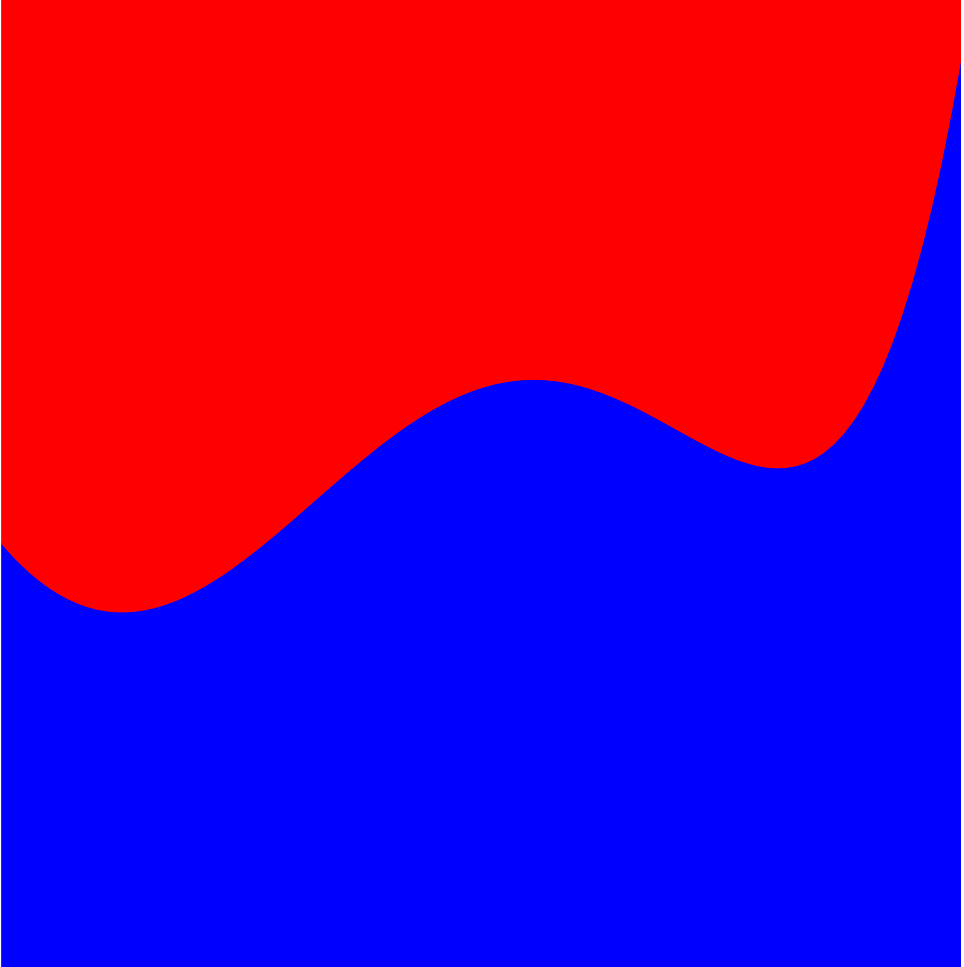}
}
\subcaptionbox{Smiley} {
\includegraphics[width=0.27\linewidth]{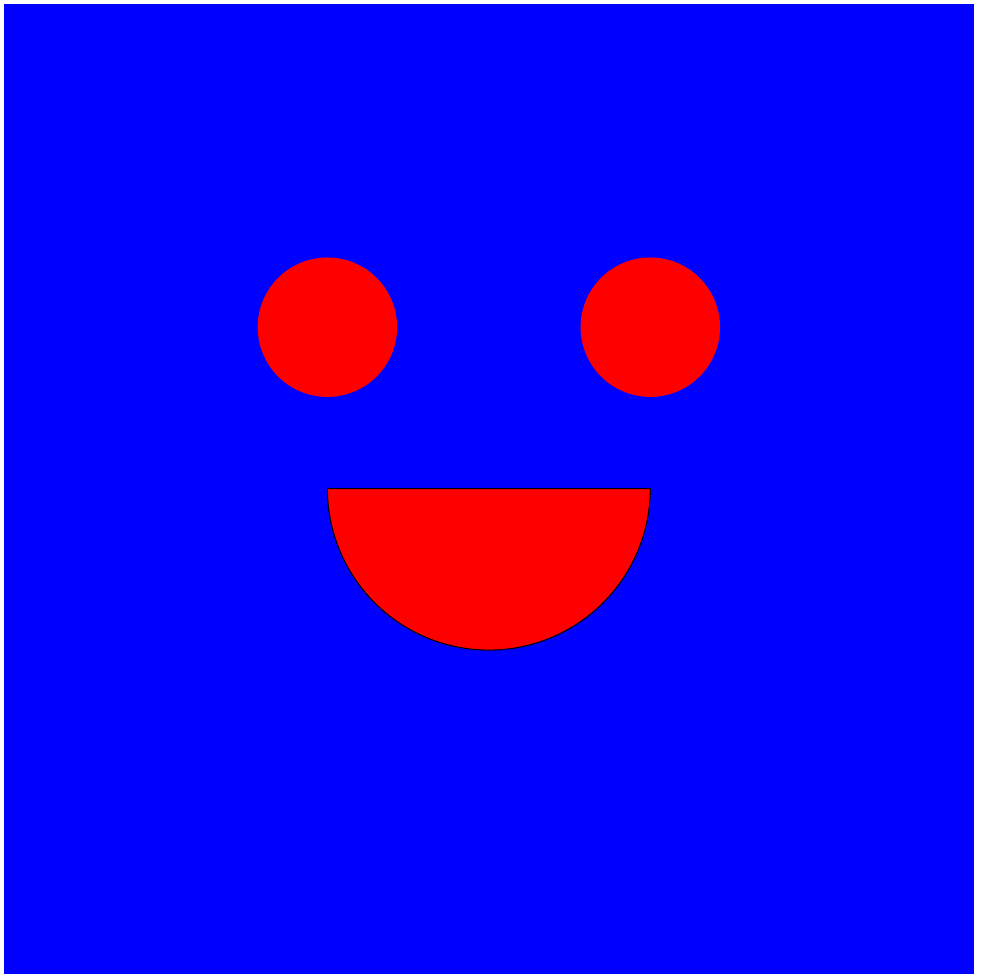}
}
\subcaptionbox{Bullseye} {
\includegraphics[width=0.27\linewidth]{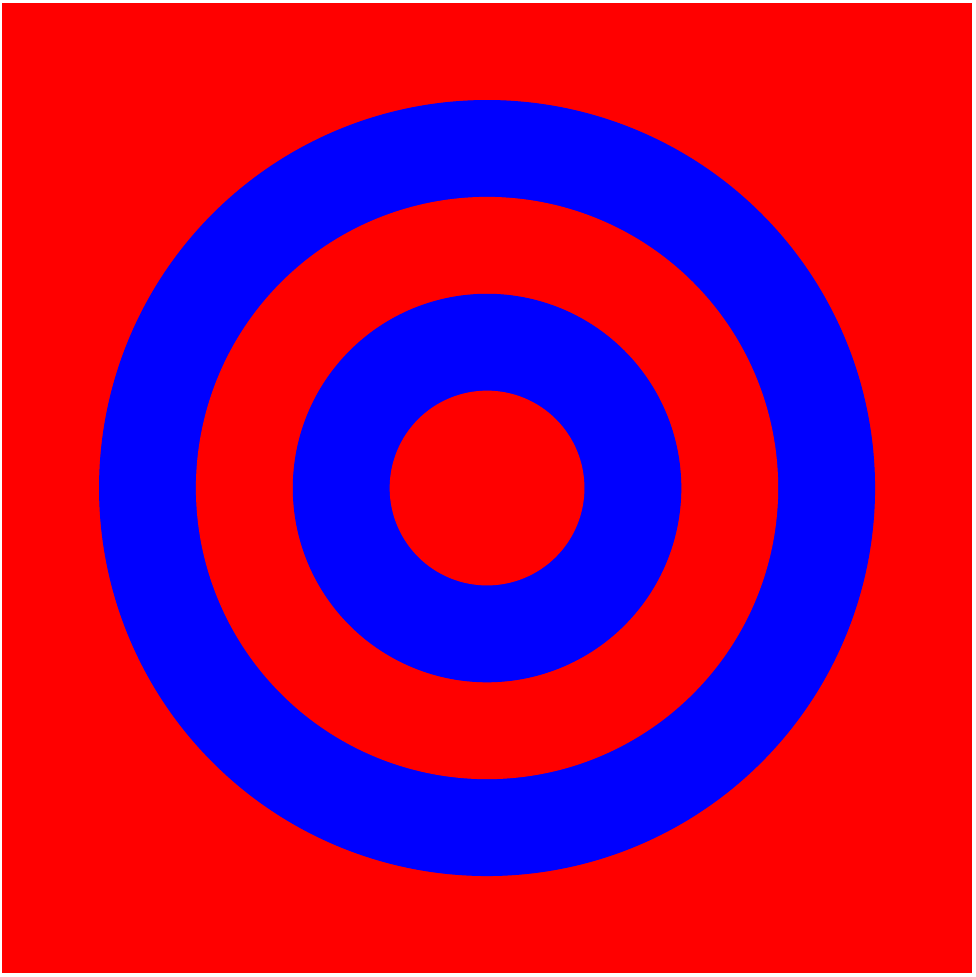}
}
\caption{Top-arm boundaries. Red and blue regions correspond to where top-arm is arm 1 and 2 respectively.}
\label{fig:Scenarios}
\end{figure}

\begin{figure}[!htbp]
\centering
\subcaptionbox{Quintic} {
\includegraphics[width=0.26\linewidth]{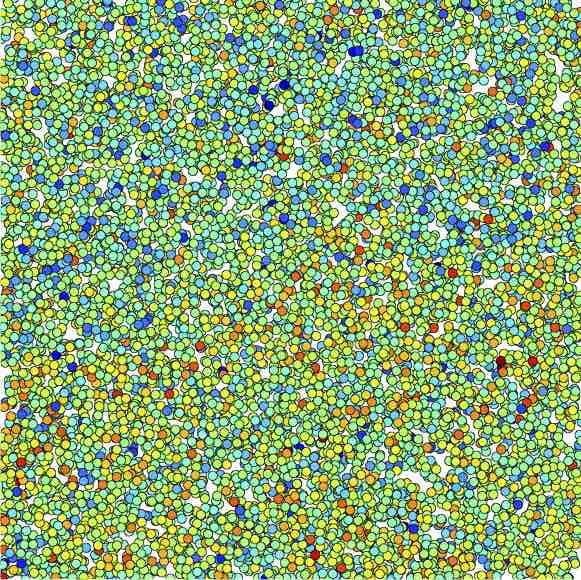}
}
\subcaptionbox{Smiley} {
\includegraphics[width=0.26\linewidth]{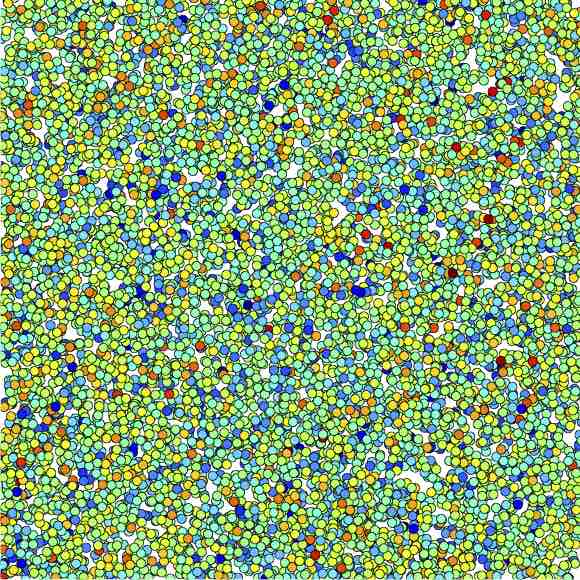}
}
\subcaptionbox{Bullseye} {
\includegraphics[width=0.26\linewidth]{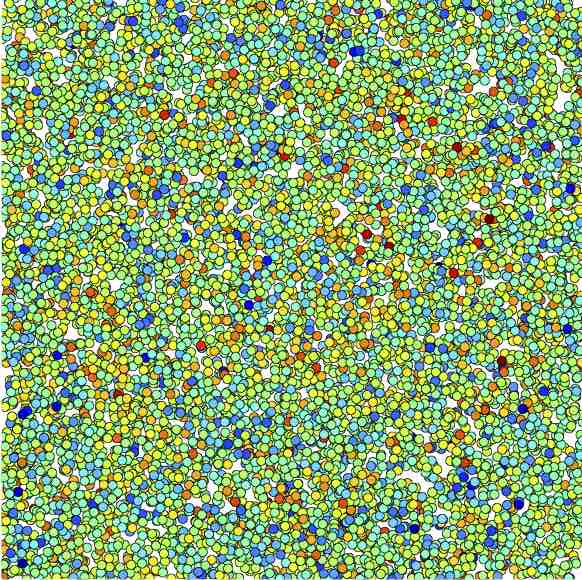}
}
\includegraphics[height=0.26\linewidth]{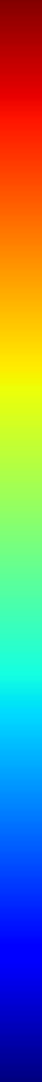}
\caption{\label{fig:RewardDensity}
Observed reward density plots from 10k uniform samples illustrating pseudo-randomness of training data. In the colormap (right) warmer colors correspond to higher values, normalized on the range of the observed rewards.}
\end{figure}

\begin{figure}[!htbp]
\centering
\subcaptionbox{Quintic Ridge} {
\includegraphics[width=0.27\linewidth]{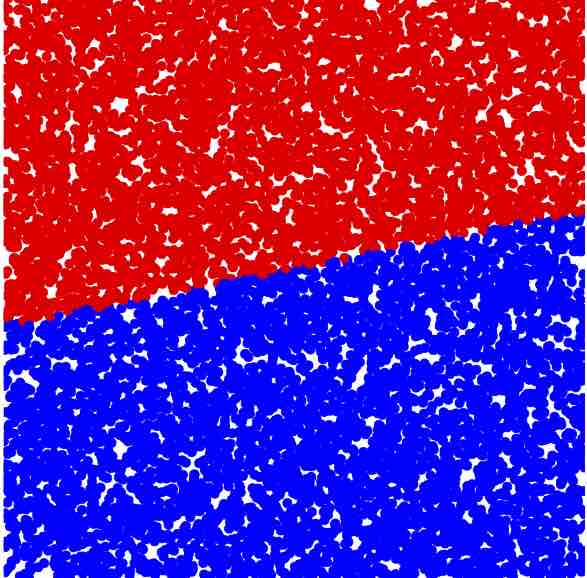}
}
\subcaptionbox{Smiley Ridge} {
\includegraphics[width=0.27\linewidth]{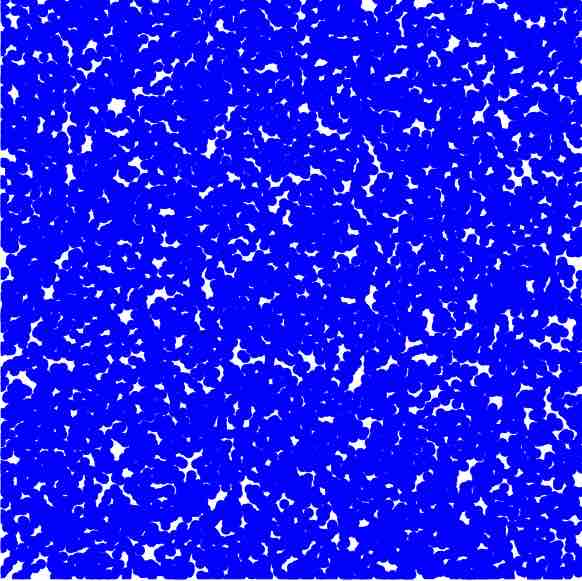}
}
\subcaptionbox{Bullseye Ridge} {
\includegraphics[width=0.27\linewidth]{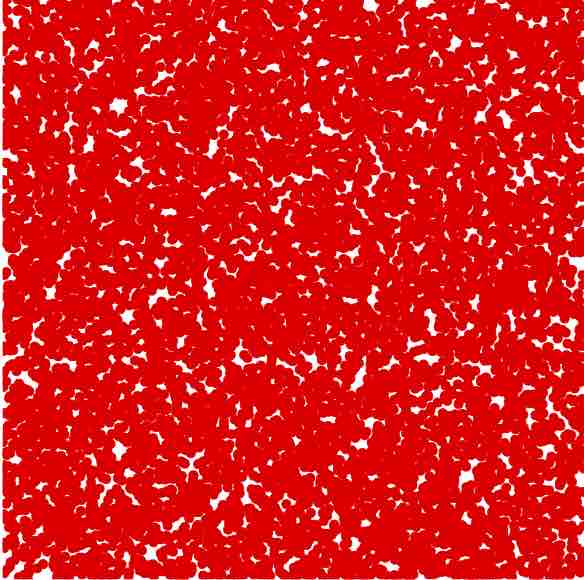}
}\\
\subcaptionbox{Quintic $k$-NN} {
\includegraphics[width=0.27\linewidth]{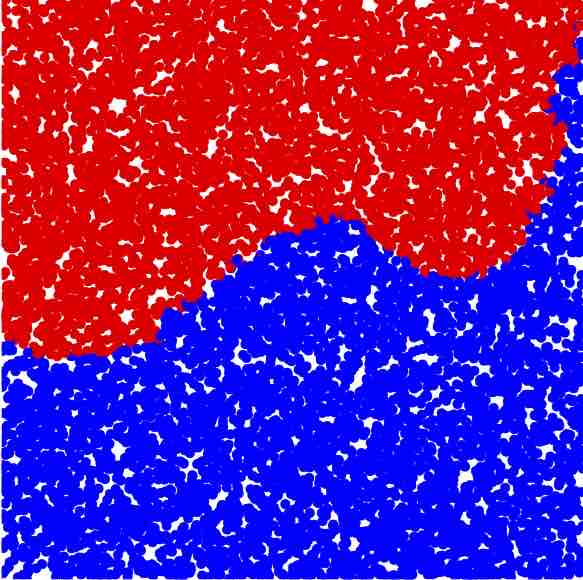}
}
\subcaptionbox{Smiley $k$-NN} {
\includegraphics[width=0.27\linewidth]{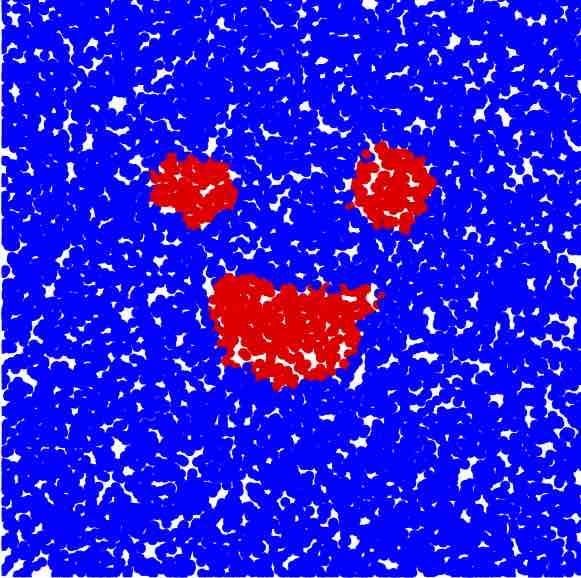}
}
\subcaptionbox{Bullseye $k$-NN} {
\includegraphics[width=0.27\linewidth]{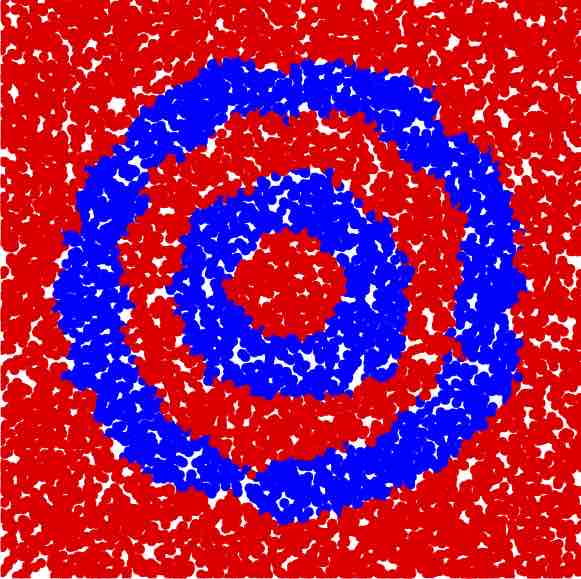}
}
\caption{\label{fig:TopArmResults}
Test results on top-arm identification using Ridge regression and 25-NN regression. Contexts are labeled in red and blue if arms 1 and 2 are estimated to be top respectively.
}
\end{figure}

\begin{table*}[th!]
\centering
\caption{\label{tab:hyperparam} Top-arm identification and regret results from Ridge and $k$-NN regressors. Each model was tuned individually and optimal hyperparameters are shown. $k$-NN performs better on both metrics for all three scenarios.}
\begin{tabular}{ccccccc} \toprule
&\multicolumn{2}{c}{\textbf{Quintic Function}}&\multicolumn{2}{c}{\textbf{Smiley}} &\multicolumn{2}{c}{\textbf{Bullseye}}\\
& Ridge&kNN &Ridge& kNN & Ridge&kNN \\
\midrule
\textbf{Top-Arm Test Error from Uniform Sampling}&
0.065& \textbf{0.002} & 0.080 & \textbf{0.000}& 0.335 & \textbf{0.005}\\
Number of samples& 500k & 500k  & 2k  & 5000k & 100k  & 500k   \\
Number of neighbors& -& 100 & - & 50 & - & 20 \\
\midrule\midrule
\textbf{Test Regret from UCB sampling}& 0.0315 & \textbf{0.001} & 0.0375 & \textbf{0.0135}& 0.161 & \textbf{0.004} \\
Number of samples& 1k & 500k  & 5k  & 1000k & 50k  & 1000k   \\
Number of neighbors& -& 100 & - & 20 & - & 100 \\
\bottomrule
\end{tabular}
\end{table*}

\subsubsection{Qualitative Analysis}
We first qualitatively show that $k$-NN regression can successfully model the bandits whereas the linear method cannot. The difficulty of the task is illustrated by Figure \ref{fig:RewardDensity}, which plots 10k uniformly sampled samples from each scenario with a colormap. We can see that a human would have a hard time recovering the regions where each arm is top due to the randomness in the observed rewards. This randomness is considerable as we set $\sigma=0.5$ to be the same as $|f_i(x\in R_i)-f_i(x\notin R_i)|$.

We fix the number of training samples $N$ to 10k and the number of nearest neighbors to $k=25$. We evaluate on 10k random test samples. Figure \ref{fig:TopArmResults} shows that $k$-NN regression does an excellent job of reproducing the region boundaries. Ridge regression does a poor job in the Quintic Function case, making a linear approximation to the quintic curve, and completely fails in the Smiley and Bullseye Cases, simply choosing the arm whose top-arm region is larger.

\subsubsection{Quantitative Analysis}
We report numerical results and optimal hyperparameters in Table \ref{tab:hyperparam}. We tuned other hyperparameters using grid search on a validation set of size 1k using grid search and we evaluate performance of our models on a test set of size 1k. We use the UCB strategy in \citet{auer2002nonstochastic} (a simplified version of UCB by \citet{agrawal2013thompson}). We found that a confidence level of $0.1$ worked well for all settings. We see that $k$-NN significantly outperforms Ridge regression for both top-arm identification and regret minimization in all three scenarios (Table \ref{tab:hyperparam}).


\subsection{Image Classification Experiments}
We extend our experiments to image classification of the canonical MNIST dataset, which consists of 60k training images and 10k test images of  isolated, normalized, hand-written digits. The task is
to classify each 28$\times$28 image into one of ten classes. We reframe this as a contextual MAB problem by treating the classes as arms and the images as the contexts. Note that for every context, the payoff of all arms are known: 1 if the class is the true label and 0 otherwise. We compare $k$-NN and Ridge regressions at regret minimization using the UCB strategy. As before we use the UCB strategy in \citet{auer2002nonstochastic} and fix the confidence level to 0.1. We do not employ any data augmentation.

We obtain test regret of 17.5\% from LinUCB with $\alpha=5$, where $\alpha$ is the coefficient of L2 regularization, and significantly lower test regret of 5.8\% from 4-NNUCB. Figure \ref{fig:mnist} shows that $k$-NN regression maintains lower regret than Ridge regression over a range of values of $k$ and $\alpha$. We note that Ridge regression working well for relatively large values of $\alpha$ itself suggests that it is a poor model for the task.

\begin{figure}[!htbp]
\centering
\includegraphics[width=0.93\linewidth]{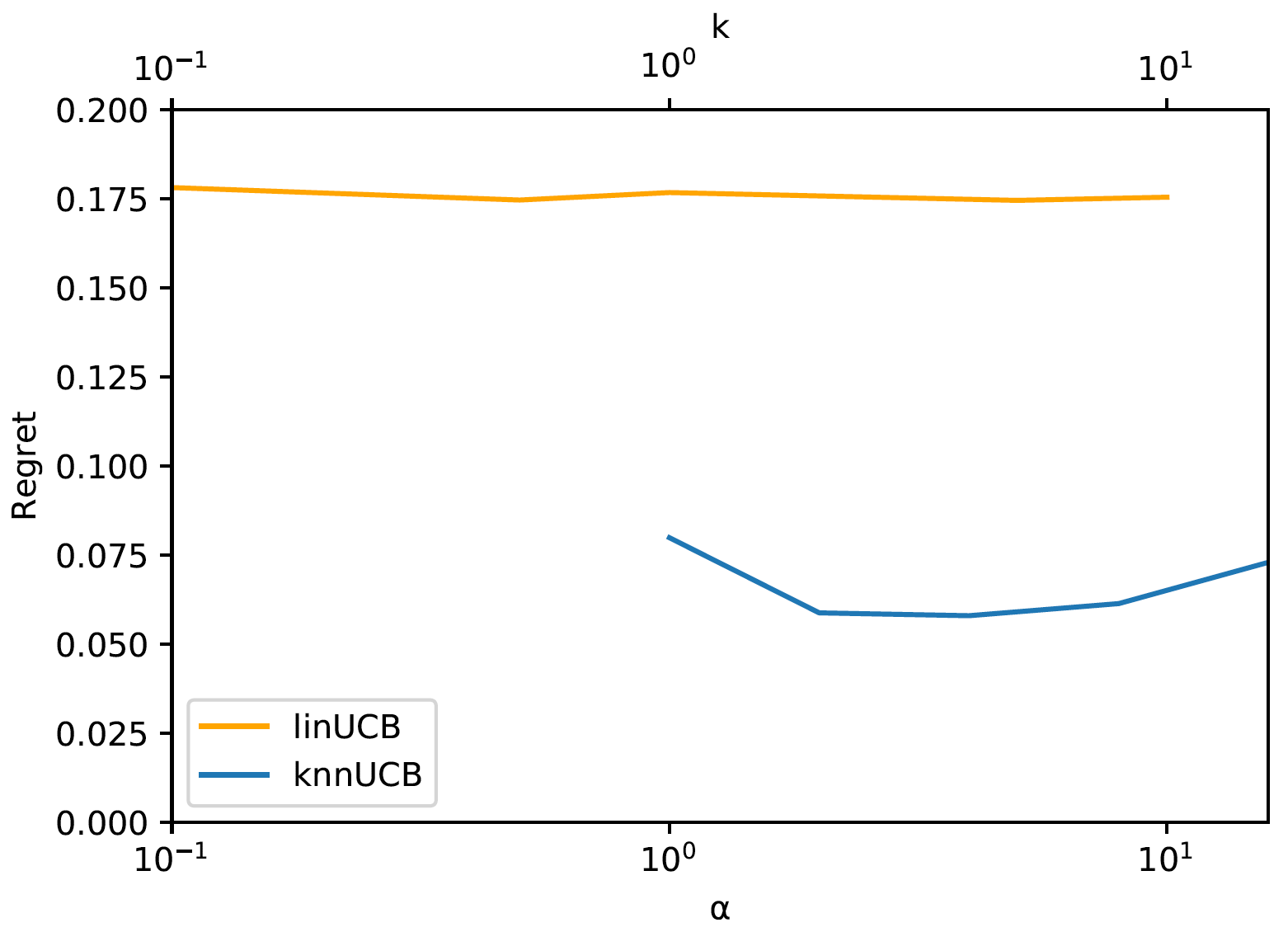}
\caption{\label{fig:mnist}
Test results on regret minimization for MNIST image classification over varied values of $\alpha$ (for LinUCB) and $k$ (for $k$-NNUCB). The nonparametric approach achieves significantly lower regret over a range of hyperparameters.}
\end{figure}

\section{Conclusion}
For the multi-armed bandit setting, we use nonparametric regression to attain tight results for top-arm identification and 
a sublinear regret of $\widetilde{O}(T^{\frac{1+D}{2+D}})$, where $D$ is the dimension of the context. We also show that if the underlying context space has a lower intrinsic dimension $d < D$, then our algorithm automatically adapts to the lower dimension and attains a faster rate of $\widetilde{O}(T^{\frac{1+d}{2+d}})$. We also provide a procedure for recovering the maximal connected regions in a support where a particular arm is the top-arm and provide a consistency analysis.
We then give a natural extension to infinite-armed contextual bandits. Our simulations confirm that our method is able to learn in the contextual setting with arbitrary decision boundaries, even in the presence of significant noise, and our experiments on classification of MNIST images demonstrate superior performance of our method over LinUCB on a real world task.  
\bibliographystyle{aaai}
\fontsize{9}{0}
\bibliography{bibfile}

\end{document}